%% file: neurips_2026.tex
\documentclass{article}

\usepackage[preprint]{neurips_2026}

\usepackage[utf8]{inputenc} 
\usepackage[T1]{fontenc}    
\usepackage{hyperref}       
\usepackage{url}            
\usepackage{booktabs}       
\usepackage{amsfonts}       
\usepackage{nicefrac}       
\usepackage{microtype}      
\usepackage{xcolor}         

\usepackage[table]{xcolor}
\usepackage{booktabs}
\usepackage{multirow}
\usepackage{arydshln}
\usepackage{amsmath}
\usepackage{natbib}
\usepackage{inconsolata}
\usepackage{amsthm}
\usepackage{cleveref}
\usepackage{graphicx}
\usepackage{minitoc}
\usepackage{adjustbox}
\usepackage{tikz}
\usepackage{subcaption}
\usepackage{enumitem}
\setlist[itemize]{noitemsep}

\definecolor{darkblue}{rgb}{0, 0, 0.5}
\hypersetup{colorlinks=true, citecolor=darkblue, linkcolor=darkblue, urlcolor=darkblue}

\usetikzlibrary{arrows.meta, positioning, fit, backgrounds, calc}

\def\comments{1}
\relax
\newcommand{\ensuretext}[1]{#1}
\if\comments1
    \newcommand{\tempcomment}[4]{\ensuretext{\textcolor{#3}{[\ensuretext{\textcolor{#3}{\ensuremath{^{\textsc{#1}}_{\textsc{#2}}}}} #4]}}}
\else
    \newcommand{\tempcomment}[4]{\ifvmode\else\unskip\fi}
\fi

\newcommand{\oursystem}{\texttt{PreFT}}
\newcommand{\direftp}{DiReFT\textsuperscript{P}}
\newcommand{\direfta}{DiReFT\textsuperscript{A}}
\newcommand{\lorap}{LoRA\textsuperscript{P}}
\newcommand{\direft}{DiReFT}
\newcommand{\loreft}{LoReFT}
\newtheorem{theorem}{Theorem}

\DeclareMathOperator{\sign}{\mathrm{sign}}

\crefformat{section}{\S#2#1#3}
\crefformat{subsection}{\S#2#1#3}
\crefformat{subsubsection}{\S#2#1#3}
\crefformat{paragraph}{\P#2#1#3}
\crefformat{subparagraph}{\P#2#1#3}
\crefmultiformat{section}{\S#2#1#3}{ and~\S#2#1#3}{, \S#2#1#3}{, and~\S#2#1#3}
\crefmultiformat{subsection}{\S#2#1#3}{ and~\S#2#1#3}{, \S#2#1#3}{, and~\S#2#1#3}
\crefmultiformat{subsubsection}{\S#2#1#3}{ and~\S#2#1#3}{, \S#2#1#3}{, and~\S#2#1#3}
\crefmultiformat{paragraph}{\P\P#2#1#3}{ and~#2#1#3}{, #2#1#3}{, and~#2#1#3}
\crefmultiformat{subparagraph}{\P\P#2#1#3}{ and~#2#1#3}{, #2#1#3}{, and~#2#1#3}
\crefrangeformat{section}{\mbox{\S\S#3#1#4--#5#2#6}}
\crefrangeformat{subsection}{\mbox{\S\S#3#1#4--#5#2#6}}
\crefrangeformat{subsubsection}{\mbox{\S\S#3#1#4--#5#2#6}}
\crefrangeformat{paragraph}{\mbox{\P\P#3#1#4--#5#2#6}}
\crefrangeformat{subparagraph}{\mbox{\P\P#3#1#4--#5#2#6}}
\crefname{part}{Part}{Parts}
\Crefname{part}{Part}{Parts}
\crefname{chapter}{Ch.}{Ch.}
\Crefname{chapter}{Ch.}{Ch.}
\crefname{footnote}{Fn.}{Fn.}
\Crefname{footnote}{Fn.}{Fn.}
\crefname{figure}{Figure}{Figures}
\crefname{table}{Table}{Tables}
\crefname{subfigure}{Figure}{Figures}
\Crefname{subfigure}{Figure}{Figures}
\crefname{appsec}{Appendix}{Appendices}
\Crefname{appsec}{Appendix}{Appendices}
\crefname{algocf}{Algorithm}{Algorithms}
\Crefname{algocf}{Algorithm}{Algorithms}

\let\oldappendix\appendix
\renewcommand{\appendix}{\crefalias{section}{appsec}\oldappendix}

\title{\oursystem{}: Prefill-only finetuning for inference efficiency}

\author{%
  Andrew Lanpouthakoun$^*$$^\dagger$ \quad Aryaman Arora$^*$$^\dagger$ \quad Zhengxuan Wu$^\dagger$ \quad Dhruv Pai$^{\ddagger}$ \\
  \textbf{Ben Keigwin}$^{\ddagger}$ \quad
  \textbf{Dan Jurafsky}$^\dagger$ \quad \textbf{Christopher Potts}$^\dagger$\\ 
  $^\dagger$Stanford University \quad $^{\ddagger}$Tilde Research \\
  \texttt{\{andlanpo,aryamana\}@stanford.edu}
}

\begin{document}
\doparttoc 
\faketableofcontents

\maketitle
\vspace{-3em}
\begin{center}
\small
\raisebox{-0.2\height}{\includegraphics[width=1em,height=1em]{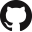}}\hspace{0.5em}\href{https://github.com/stanfordnlp/preft}{\texttt{github.com/stanfordnlp/preft}}
\end{center}

\begin{abstract}

Large language models can now be personalised efficiently at scale using parameter efficient finetuning methods (PEFTs), but \emph{serving} user-specific PEFTs harms throughput, even with specialised kernels and memory management techniques.
This is because, theoretically and empirically, a mismatch exists between prefill (processing a large number of tokens at once) and decode (generating a single token autoregressively): the latter has far lower throughput when serving multiple adapters.
Rather than optimising performance relative to parameter count, for efficient multi-adapter serving, we instead ought to optimise performance relative to \textit{serving throughput}.
We therefore propose \textbf{\oursystem{}} (Prefill-only Finetuning), wherein we only apply the adapter to prefill tokens and discard it afterwards. \oursystem{} significantly increases throughput with minimal effect on performance. We develop and release an efficient implementation of two prefill-only PEFTs, LoRA and ReFT, 
on the vLLM inference engine. We first show that serving multi-user \oursystem{}s is more efficient than traditional PEFTs ($1.9\times$ the throughput when serving $512$ adapters on Llama 3.1 70B). Then, we compare the performance of prefill-only vs.~all-token adapters on a variety of supervised finetuning and reinforcement learning tasks with LMs at varying scales. On SFT, we observe that the evaluation loss of \oursystem{}s is higher than PEFTs, but can be compensated by increasing rank with nearly no reduction in throughput. On RL, we consistently find that \oursystem{}s approach parity with standard PEFTs. Together, this work validates prefill-only adaptation of LLMs as a more favourable accuracy--throughput tradeoff than existing PEFTs for personalised serving.
\end{abstract}

\begin{figure}[!h]
    \centering
    \includegraphics[width=\linewidth]{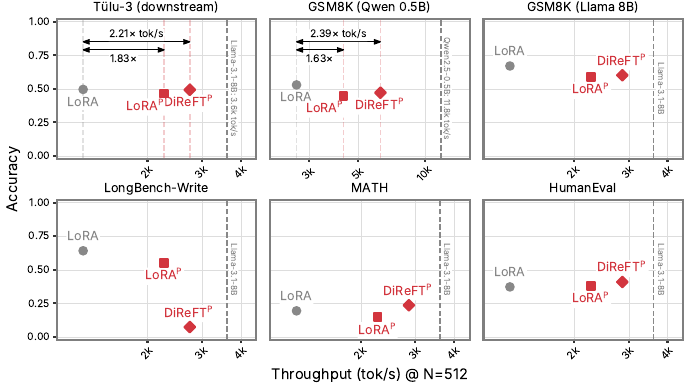}
    \caption{\textbf{\oursystem{} adapters approach parity on a variety of tasks with much better throughput.} Inference throughput vs.~accuracy for PEFTs (LoRA) and \oursystem{}s (\lorap{}, \direftp{}) for a variety of tasks. Tasks besides GSM8K are on Llama 3.1 8B Base. To roughly match parameter count in these plots, LoRA/\lorap{} is always rank-$1$ and \direftp{} is rank-$16$ (Tülu-3, LongBench-Write) or rank-$8$ otherwise. On Tülu-3 (SFT) and for RL tasks (GSM8K, MATH, HumanEval), score is evaluation accuracy on selected downstream benchmarks (see \cref{sec:results}). On LongBench-Write, score is length-following ($S_l$) on 4k--20k token outputs. Inference throughput (tokens/s) is measured on the Uniform Punica microbenchmark when serving 512 adapters on one GPU, simulating multi-user serving (see \cref{sec:vllm}).}
    \label{fig:hero}
\end{figure}

\section{Introduction}

Personalising large language models for billions of users is an open research problem; besides memory and system prompt personalisation, considerable effort has been dedicated to devising parameter-efficient finetuning methods (PEFTs) which reduce the memory footprint of finetuning. LoRA \citep{lora}, the most popular weight-based PEFT, is an attractive option here; it creates lightweight low-rank adapters which can be added to the model's existing weight matrices at inference time. In turn, as systems for serving LLMs have become increasingly optimised, many have sought to make serving thousands of LoRAs compatible with efficient LLM inference \citep[][\textit{inter alia}; see \cref{sec:related}]{punica,sheng2024sloraservingthousandsconcurrent}.

When serving multiple LoRAs in a single inference batch, the biggest challenge is that each batch item demands different sets of LoRA weights; this requires (a) developing custom kernels (e.g.~Punica; \citealp{punica}), since standard batched matrix multiplication is no longer applicable, as it assumes the same matrix is multiplied to each batch item, and (b) managing the memory overhead of keeping multiple LoRAs on GPU, such as by offloading to CPU and paging according to a scheduling algorithm (e.g.~S-LoRA; \citealp{sheng2024sloraservingthousandsconcurrent}). Even with these highly custom optimisations, multi-LoRA serving harms inference throughput.

In this work, rather than optimising what already exists, we take a different approach: we design and implement PEFTs that are inherently suited for efficient LLM inference from the outset. Our key motivation is the distinction between \emph{prefill} (processing the input prompt in parallel) and \emph{decode} (generating outputs sequentially) in efficient LLM inference: prefill is \textit{compute-bound}, so serving PEFTs at this stage does not harm compute utilisation; on the other hand, decode is \textit{memory-bound} and therefore should batch across requests. However, serving many PEFTs in a single batch imposes significant memory overhead and thus erodes the gains from batching. In other words, \textbf{per-request personalisation is more expensive at decode than at prefill}.

To address this challenge, we create efficient \emph{prefill-only} variants of two existing PEFTs, LoRA \citep{lora} and ReFT \citep{wu2024reft}, aptly named \oursystem{}s. Whereas LoRA operates on weights, ReFT intervenes selectively on activations and is thus well-suited for interventions only on selected tokens.
We implement efficient multi-adapter versions of these methods in a fork of the vLLM inference engine \citep{vllm}, and further add support for our inference backend in the \texttt{trl} reinforcement learning library. Our experimental results provide strong support for two key claims, which we summarise in \cref{fig:hero}.

First, we show that \textbf{\oursystem{}s are efficient at inference} (\cref{sec:vllm}). We benchmark our multi-\oursystem{} serving system against existing multi-LoRA serving at all token positions in vLLM: without any custom kernels, prefill-only ReFT achieves $1.9\times$ the throughput of multi-LoRA when serving up to $512$ adapters on \mbox{Llama 3.1 70B} with tensor parallelism across $4$ H100s, and prefill-only LoRA achieves $1.87\times$. Gains hold on a variety of model scales ($2.21\times$ / $1.83\times$ on Llama 3.1 8B, $2.39\times$ / $1.63\times$ on Qwen 2.5 0.5B for \direftp{} / \lorap{}, respectively).

Second, we show that \textbf{\oursystem{}s maintain strong performance} (\cref{sec:results}). On SFT, \oursystem{}s have higher evaluation loss than all-position PEFTs on Tülu-3 and OpenThoughts in data- and parameter-matched training, but find no statistically significant difference in accuracy on downstream evaluations. On RL tasks, \oursystem{}s approach PEFTs on math reasoning with GSM8K and MATH \citep{gsm8k, math} and code generation trained with MBPP and evaluated on the HumanEval dataset \citep{mbpp, humaneval}, but overall a small difference remains. On SFT for long-form text generation with trained with LongWriter and evaluated with LongBench-Write \citep{longwriter}, we find that \lorap{} maintains similar length-control ability and quality as traditional LoRA.

Overall, we find that \oursystem{}s are worth deploying over traditional all-position PEFTs in the multi-user personalisation setting: they maintain most or all of the performance of traditional PEFTs while significantly improving inference efficiency.

\section{Related work}
\label{sec:related}

\paragraph{Parameter-efficient finetuning (PEFT).} A variety of PEFT techniques have been proposed in the literature; broadly, they freeze the original model and only train some lightweight parameters applied as additional operations. We focus on LoRA \citep{lora} and ReFT \citep{wu2024reft}; the former applies low-rank adapters to selected weight matrices, and the latter applies low-rank hidden representation edits to selected token positions and layers. Other approaches add entire model components \cite[][\textit{inter alia}]{houlsbyParameterEfficientTransferLearning2019,pfeiffer-etal-2020-mad} or learnable soft tokens \citep{li-liang-2021-prefix}. Additionally, many variants of LoRA \citep{dora,vera,loraxs,morris2026learningreason13parameters} and ReFT \citep{zeng-etal-2025-towards-context,wang2025commandvpastingllmbehaviors} have been proposed which modify parametrisation and/or initialisation. A few works apply mask LoRA on selected token positions \citep{alora,dietz2026splitpersonalitytrainingrevealing},\footnote{In their appendix, the latter even suggests a masked neuron-level adapter, which is similar in spirit to ReFT.} but none propose prefill-only application. We choose to focus on LoRA for its widespread adoption in serving infrastructure, and we choose ReFT because it is designed for token-level interventions. 

\paragraph{Serving personalised LLMs.} Serving multiple finetunes of one model adds overhead to LLM inference. While LoRA adapters can be merged into and unmerged from model weights as in dLoRA \citep{wu2024dlora}, if done naïvely this reduces throughput and increases latency in \textit{multi}-LoRA serving. Therefore, S-LoRA \citep{sheng2024sloraservingthousandsconcurrent} and Punica \citep{punica} proposed serving \textit{unmerged} LoRA adapters with specialised kernels for multi-LoRA batch matrix multiplication, along with offloading LoRAs to CPU memory and using scheduling algorithms (e.g.~paging in S-LoRA and dLoRA) to fetch them into GPU as needed while minimising latency and idle compute. Later work introduces further memory management and compression improvements for multi-LoRA serving \citep{gabrielsson2025compress,zhang2025improving,zhu2025cannikin,dbrxfastpeft,ni2026plora}. \oursystem{}s are able to make use of all of these optimisations.



\section{Introducing \oursystem{}}
\label{sec:intro}
We now introduce \oursystem{} as a simple modification of existing all-position adapters. For background, we first rigorously define the two PEFTs that we experiment with: LoRA \citep{lora} and ReFT  \citep{wu2024reft} under a unified structure. We then define our approach and elaborate on the efficiency motivation for prefill-only adapters.

\subsection{Adapter methods}
We first define an adapter $\mathcal{A}_\phi$ on a frozen base model $\mathcal{M}$ as a tuple $\mathcal{A}_\phi = \langle \Phi_\phi, \mathcal{F}, \mathcal{P} \rangle$ consisting of:
\begin{itemize}
    \item an \emph{intervention function} $\Phi_\phi$ with trainable parameters $\phi$,
    \item a set of \emph{target modules} $\mathcal{F}$ within $\mathcal{M}$ at which the intervention is applied,
    \item a set of \emph{target positions} $\mathcal{P}$ within each input sequence at which the intervention is active.
\end{itemize}
Each target module $f \in \mathcal{F}$ is a computation within $\mathcal{M}$ that takes an input $\mathbf{x}$ and produces an output $f(\mathbf{x})$. One could imagine a weight matrix applied to its input, or the identity map on the residual stream at a given layer. The adapter replaces $f(\mathbf{x})$ with $\Phi_\phi(f(\mathbf{x}))$ at positions $i \in \mathcal{P}$:
\begin{equation}
f(\mathbf{x})_i \leftarrow \begin{cases} \Phi_\phi(f(\mathbf{x})_i) & \text{if } i \in \mathcal{P} \\ f(\mathbf{x})_i & \text{otherwise} \end{cases}
\end{equation}
The base parameters $\theta$ remain frozen; only $\phi$ is trained. Different adapter methods correspond to different choices of what kind of module $\mathcal{F}$ ranges over and what form $\Phi_\phi$ takes.

\paragraph{LoRA.} LoRA \citep{lora} targets frozen weight matrices: each $f \in \mathcal{F}$ is a linear map $f(\mathbf{x}) = W \mathbf{x}$ with $W \in \mathbb{R}^{n \times m}$. LoRA introduces trainable matrices $\mathbf{B} \in \mathbb{R}^{n \times r}$ and $\mathbf{A} \in \mathbb{R}^{r \times m}$ with $r \ll \min(n, m)$, and defines:
\begin{equation}
\Phi^{\text{LoRA}}_\phi(f(\mathbf{x})) = f(\mathbf{x}) + \alpha \mathbf{B}\mathbf{A} \mathbf{x}
\end{equation}
where $\alpha$ is a constant scaling prefactor and $\phi = \{(\mathbf{A},\mathbf{B})\}_{f \in \mathcal{F}}$. We work in the experimental setting where the choices for $\mathcal{F}$ are the attention projections and MLP projections at all layers. In standard LoRA, $\mathcal{P}$ spans all token positions.

\paragraph{ReFT.} ReFT \citep{wu2024reft} targets residual streams: each $f \in \mathcal{F}$ is the identity map at a given layer $l$, so $f(\mathbf{x}) = \mathbf{h}^l$. We focus on the DiReFT parametrisation, which we found most effective:
\begin{equation}
\Phi^{\text{DiReFT}}_\phi(\mathbf{h}) = \mathbf{h} + \mathbf{B}^\top(\mathbf{A} \mathbf{h} + \mathbf{b})
\end{equation}
where $\mathbf{A}, \mathbf{B} \in \mathbb{R}^{r \times d}$ and $\mathbf{b} \in \mathbb{R}^r$, with $r < d$, and $\phi = \{(\mathbf{A}, \mathbf{B}, \mathbf{b})\}_{f \in \mathcal{F}}$. Since $\mathbf{A}$ is unconstrained, $\Phi^{\text{DiReFT}}$ adds a vector lying in an at-most rank-$r$ subspace to the residual stream. In the original ReFT formulation, $\mathcal{P}$ was a tunable hyperparameter consisting of $k$ prefix and suffix prompt positions; in this work, we instead let $\mathcal{P}$ span either all prompt tokens or all token positions, as defined next.

\subsection{\oursystem{}}
We now introduce \oursystem{} as a single-line modification to existing adapters: we restrict the position set to the prompt tokens only,
\begin{equation}
    \mathcal{P}^{\text{PreFT}} = \{1, \dots, p\},
\end{equation}
where $p$ is the length of the input prompt. Applied to LoRA and DiReFT, this yields the variants \lorap{} and \direftp{}, which we compare against their all-position counterparts LoRA and \direfta{} (with $\mathcal{P}^A = \{1, \dots, p+T\}$ where $T$ is the number of generated tokens).

\paragraph{Why prefill-only adapters?} Prefill and decode steps represent different inference workloads: prefill is \textit{compute-bound} (since many tokens from a single request can be processed in parallel, occupying the available compute), and decode is \textit{memory-bound} (since each request only computes a single new token and loads a potentially large KV cache from memory).\footnote{The exact arithmetic intensity of each operation depends on implementation and hardware, but refer to e.g.~\citet{scaling-book}: ``During prefill, all matrix multiplications are basically always compute-bound [\ldots] During generation, the total token batch size must be greater than $B_{\text{crit}}$ to be compute-bound on the linear/feed-forward operations ($240$ for bf16 params on TPU v5e). Because generation happens serially, token-by-token, this requires us to batch multiple requests together, which is hard!''} This distinction necessitates different kinds of optimisation in order to maximise efficiency on both types of workloads. As a result, disaggregating prefill and decode workers entirely is increasingly common \citep[][\textit{inter alia}]{zhong2024distserve,mooncake,qin2026prefillasaservicekvcachenextgenerationmodels}.

To maximise throughput during decode, it is beneficial to batch multiple requests. However, if each request requires serving a unique PEFT adapter, this adds to memory overhead since (like per-request KV caches) each PEFT for the batch must be loaded into memory. To be clear, \citet{lorafusion} show that LoRA's down projection (where $r$ is rank; $d$ is output dimension, and $b$ is batch size) is already \textit{memory-bound} in half-precision:
\begin{equation}
    \mathbb{I}_{\text{LoRA,down}} = \frac{\text{FLOPs}}{\text{bytes}} = \frac{2bdr}{2(bd + dr + br)} = \frac{1}{\frac{1}{r} + \frac{1}{b} + \frac{1}{d}} \ll \mathbb{B}
\end{equation}
where $\mathbb{B}$ is e.g.~295 for FP16 on NVIDIA H100. The down projection for DiReFT is identical and therefore also memory-bound. This adds significant overhead at already memory-bound decode.

We therefore simply propose \textbf{not applying the memory-bound PEFT operations at decode steps.} This is in addition to existing serving optimisations such as custom multi-LoRA kernels and memory paging \citep{punica, sheng2024sloraservingthousandsconcurrent}: \oursystem{} removes the decode-time cost entirely rather than amortising it, and composes naturally with these techniques when they are applied to prefill. It remains for us to demonstrate that the resulting efficiency gains are worth any cost in downstream quality.


\section{Efficient multi-\oursystem{} inference}
\label{sec:vllm}

We now implement and benchmark multi-\oursystem{} serving with LoRA and ReFT in a fork of the LLM inference engine vLLM \citep{vllm}. We compare our approach against vLLM's built-in multi-LoRA serving, and show that prefill-only adapters are highly performant under a variety of inference workloads. For example, on \texttt{Llama-3.1-70B} we achieve $1.9\times$ and $1.87\times$ more inference throughput than multi-LoRA for prefill-only ReFT and LoRA, respectively. Notably, unlike for LoRA, we do not use specially designed kernels to optimise multi-ReFT serving, but we still achieve significant efficiency gains.

\begin{figure*}[t]
  \centering
  \begin{subfigure}[b]{0.48\linewidth}
    \includegraphics[width=\linewidth]{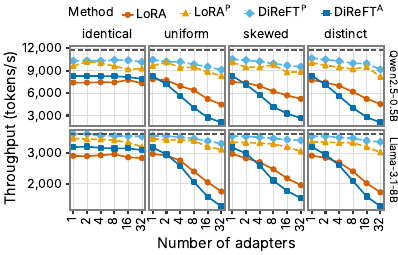}
    \caption{Throughput for LoRA, \direftp{}, and \direfta{} under 4 inference workloads with varying num.~adapters.}
    \label{fig:punica-on-gpu}
  \end{subfigure}\hfill
\begin{subfigure}[b]{0.48\linewidth}
  \includegraphics[width=\linewidth]{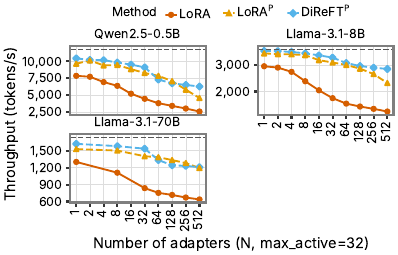}
      \caption{Throughput for LoRA vs.~\oursystem{}s in the \textbf{Uniform} setting, scaling up to 512 adapters with paging.}
  \label{tab:punica-512}
\end{subfigure}
  \caption{\textbf{\oursystem{}s maintain throughput more effectively than traditional PEFTs as number of adapters increase.} Inference throughput (tokens/s) on the Punica microbenchmark when comparing rank-$1$ LoRA (prefill-only vs.~all positions) and rank-$8$ DiReFT (prefill-only vs.~all positions) with varying numbers of adapters.}
  \label{fig:punica-single-gpu}
\end{figure*}

\paragraph{Implementation.}
To make \oursystem{} usable at inference time, we fork \texttt{vLLM} and integrate ReFT and \lorap{} adapters directly into the serving engine. The fork also adds compatibility for both multi-adapter and single adapter runs and exposes a weight-sync interface that pushes updated adapter parameters to live inference workers in a single collective call, enabling tight training--inference loops for on-policy RL without restarting the engine. The implementation works transparently under chunked prefill and across the attention kernels vLLM ships. \footnote{Full details, including the position-mask and CUDA-graph considerations that make this work, are provided in Appendix~\ref{app:vllm}.}

\subsection{Benchmarking}

We benchmark the throughput of \oursystem{}s against the multi-LoRA implementation in vLLM, using the microbenchmarks from Punica \citep{punica}. vLLM implements both multi-LoRA memory paging using the approach proposed in S-LoRA \citep{sheng2024sloraservingthousandsconcurrent} and custom Triton kernels for heterogeneous batch LoRA processing based on Punica.\footnote{See \url{https://github.com/vllm-project/vllm/pull/5036}.}

\paragraph{Single GPU, all adapters on-GPU.} The single GPU microbenchmark in Punica evaluates throughput given $1,000$ requests processed in a first-come/first-served manner (not offline). We replicate this microbenchmark on a single H100 80GB GPU, for varying numbers of adapters. All adapters are kept in GPU memory in this setting so as to limit the comparison to inference efficiency. The maximum batch size for the vLLM worker is set to $32$. We test four types of workloads: \textbf{Identical} (all requests to one adapter; overhead is from holding adapters in memory), \textbf{Uniform} (adapter is chosen uniformly at random), \textbf{Distinct} (all requests are to different adapters), and \textbf{Skewed} (adapter is sampled from a Zipfian distribution). We run this benchmark on \texttt{Qwen2.5-0.5B Instruct} and \texttt{Llama-3.1-8B Instruct}, comparing rank-$1$ LoRA, rank-1 \lorap{}, rank-$8$ \direftp{}, and rank-$8$ \direfta{}. We vary the number of adapters in $\{1, 2, 4, 8, 16, 32\}$.

In \cref{fig:punica-on-gpu}, we plot inference throughput with varying numbers of adapters in all settings, with a grey dashed line indicating baseline throughput without any adapters ($11,792$ tok/s for \texttt{Qwen2.5-0.5B}, $3,606$ tok/s for \texttt{Llama-3.1-8B}). We find that \direftp{} maintains near-baseline throughput even as the number of adapters is increased to $32$; it achieves $2.08\times$ more throughput than LoRA in the Uniform setting with \texttt{Qwen2.5-0.5B} and $1.93\times$ with \texttt{Llama-3.1-8B}.

\paragraph{Single GPU, paged memory for moving adapters from CPU to GPU.} We replicate vLLM's  multi-LoRA memory paging system (adapted from S-LoRA) for multi-ReFT, which allows keeping adapters on CPU and moving only a subset onto GPU as needed. We fix the maximum number of adapters on-GPU to $32$, and scale up the Punica single-GPU benchmark in the \textbf{Uniform} setting to $512$ adapters. \Cref{tab:punica-512} shows that \direftp{} continues to grow its throughput lead over LoRA even with paged adapter memory.

\paragraph{Multiple GPUs, paged memory.} Finally, we use tensor parallelism over $4$ GPUs to benchmark a larger model: \texttt{Llama-3.1-70B} (dense). \Cref{tab:punica-512} includes this and shows similar throughput trends.

\section{The effectiveness of \oursystem{}s}
\label{sec:results}

\begin{figure}
    \centering
    \includegraphics[width=\linewidth]{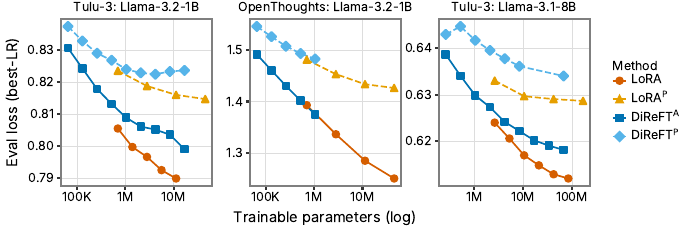}
    \caption{\textbf{\oursystem{}s underperform all-position PEFTs at matched parameter counts, but scale predictably with rank.} Eval loss (best-LR) for \lorap{} and \direftp{} compared to their all position counterparts, on Tülu-3 (\texttt{Llama-3.2-1B} and \texttt{Llama-3.1-8B}) and OpenThoughts \texttt{Llama-3.2-1B}, as a function of trainable parameter count.}
    \label{fig:sft_main}
\end{figure}

Having established the inference efficiency benefits of \oursystem{}s in \cref{sec:vllm}, we now turn to validating the performance of prefill-only ReFT and LoRA relative to their all-position counterparts. We perform the following experiments: 
\begin{itemize}
    \item SFT on instruction-following with Tülu-3 \citep{lambert2025tulu3pushingfrontiers}, with downstream evaluations on held-out benchmarks (\cref{sec:sft});
    \item RLVR on math (GSM8K and MATH) and code reasoning (MBPP, evaluations on HumanEval), with comparisons to SFT performance (\cref{sec:math}; also see \cref{sec:gsm8k-more});
    \item SFT on long-form text generation with LongWriter and evaluated on LongBench-Write (\cref{sec:long_outputs});
\end{itemize}
From the first two sets of experiments, we find that \oursystem{}s trained with SFT have higher evaluation loss but achieve parity on downstream task performance compared to all-position ReFT and LoRA; additionally, on RLVR tasks, \oursystem{}s achieve parity on MATH and HumanEval, but not GSM8K.
We further consider that \oursystem{} performance may degrade over longer generations; in experiments on LongWriter, we find that \lorap{} maintains the adapted behaviour of its all positions counterpart and investigate why \direftp does not do the same.

Overall, we find that \oursystem{}s approach downstream performance of PEFTs on RL and SFT and the gap on long outputs exists only on \direftp{}, not \lorap{}. Therefore, \oursystem{}s sacrifice little performance compared to PEFTs despite large efficiency gains.

\subsection{SFT: Tülu-3 and OpenThoughts}
\label{sec:sft}
\input{tabs/sft_prefill_vs_allpos}

We seek to compare performance between \oursystem{} and all-position PEFTs under supervised finetuning. To do so, we replicate experiments from \citet{schulman2025lora}, as a useful testbed for comparing adapter performance. We compare \direftp{} and \direfta{}, and \lorap{} and LoRA.

\paragraph{PEFTs have better evaluation loss than \oursystem{}s.} Initially, in keeping with \citeauthor{schulman2025lora}, we finetune \texttt{Llama-3.2-1B Instruct} and \texttt{Llama-3.1-8B Instruct} on Tülu-3 (\citealp{lambert2025tulu3pushingfrontiers}; a general-purpose instruction-tuning dataset) and OpenThoughts (\citealp{guha2025openthoughtsdatarecipesreasoning}; a reasoning dataset with much longer generation lengths). We truncate Tülu-3 to $2{,}048$ tokens and OpenThoughts to $16{,}384$ tokens, and only train on completions; further details are in \cref{app:sft_infra}.

In \cref{fig:sft_main}, we show the results of our sweeps, plotting the evaluation loss of the best-performing LR for each method at each rank across the three experimental settings. In general, we find that \oursystem{}s have \textit{worse evaluation loss} than all-position PEFTs, but show improvement as rank is scaled up with little effect on throughput.

\paragraph{PEFTs and \oursystem{} have similar downstream performance despite loss differences.} On Llama Instruct models, finetuning hardly changes downstream performance due to a high starting performance, so we instead run SFT with Tülu-3 only on \textit{base} \texttt{Llama-3.2-1B} and \texttt{Llama-3.1-8B} with a similar LR and rank sweep. We evaluate on IFEval, MMLU, and GSM8K, which test various downstream abilities; see \cref{app:sft_infra} for more information.
In \cref{tab:sft_prefill_vs_allpos}, we compare the best-overall LR (by eval.~loss) parameter-matched \oursystem{}s and PEFTs using the Cochran--Mantel--Haenszel test\footnote{Details behind motivation and implementation of this test are in \cref{app:statistics}.} stratified over the evaluation benchmarks and report per-comparison statistical significance. Then, we run the two-sided Wilcoxon signed-rank test on the per-comparison pairs, ultimately finding that \textbf{there is no statistically significant difference between \oursystem{}s and PEFTs on downstream tasks under SFT} ($p = 0.083$).

\subsection{RLVR: Math reasoning and code generation}
\label{sec:math}

We now continue our comparison on reinforcement learning with verifiable feedback (RLVR) tasks, focusing initially on math reasoning and code generation. We use a fork of the \texttt{trl} library for all reinforcement learning experiments, with support for our \texttt{vllm} fork as the inference backend. We use the GRPO \citep{shao2024deepseekmathpushinglimitsmathematical} algorithm to train \texttt{Qwen2.5-0.5B} and \texttt{Llama-3.1-8B}  on the train set of GSM8K, MATH, and MBPP respectively; further details are in \cref{app:rl_infra}. 


\paragraph{Results.} \cref{tab:rl_new_all_tasks_by_rank} reports prefill-only and all-position evaluation accuracy across GSM8K, MATH, and HumanEval on \texttt{Llama-3.1-8B} and \texttt{Qwen2.5-0.5B}. On \texttt{Qwen2.5-0.5B} MATH evaluations, all six prefill cells fall within $2.08$pp of their all-position counterpart and four favor prefill (two of which are notably statistically significant). HumanEval results look similar to MATH results and show parity amongst \oursystem{}s and PEFTs. However, $\Delta$s happen to be much larger across GSM8K evaluations than other groups; we further investigate this in \cref{sec:gsm8k-more} but ultimately do not have a clear explanation. Under the two-sided Wilcoxon signed-rank test, we do find a statistically significant difference between \oursystem{}s and PEFTs, with $p = 0.014 < 0.05$. With a mean $\Delta = -2.04$, the difference is not large but still consistently in favour of all-position PEFTs.

\input{tabs/rl_new}

\subsection{Characterising limits of prefill-only attenuation}
\label{sec:long_outputs}
We hypothesise that the influence of prefill-only adapters attenuates with distance and should produce coherent output at any length but progressively lose the ability to enforce behaviour as generation continues. We therefore evaluate the extent to which the prefill-only intervention's effect has faded and the base model's original behaviour reasserted itself. LongWriter \citep{longwriter} is well-suited to test this hypothesis. The benchmark prompts models to produce text at specified target lengths ranging from a few hundred to tens of thousands of tokens, and scores each generation along two independent axes: a quality score $S_q$ assessing the produced text, and a length-following score $S_l$ measuring whether the output matched the requested length.\footnote{Precise descriptions of the scoring methods and the training setup are in \cref{app:longwriter-scoring}.} We train \oursystem{}s and PEFTs with SFT at rank $16$ on \texttt{Llama-3.1-8B Instruct} and evaluate at four output-length brackets. 

\paragraph{LoRA-prefill preserves length-following; DiReFT-prefill overshoots.}
In \cref{fig:longwriter_results}, \lorap{} does not fade with decode
distance; rather its curves run parallel to LoRA throughout, rejecting our
hypothesis. \direftp{} confirms the hypothesis but fails in the
\emph{opposite} direction: rather than reverting to base behaviour at
long lengths, its conditional length-matching signal collapses into
an unconditional \emph{write long} bias.
We conduct ablations on \lorap{} specifically in \cref{app:output_attenuation}, finding that applying it to (entire or only parts of) MLPs still maintains high length-following capability, showing that in theory per-position operations are sufficient. We leave further investigation of length collapse in \direftp{} to future work. Overall, we find that \lorap{} is sufficient for maintaining high performance on this task.


\begin{figure}
    \centering
        \includegraphics[width=\linewidth]{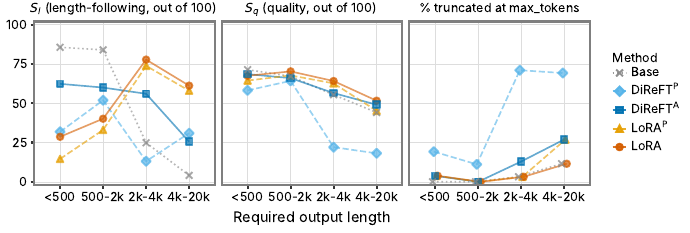}
    \caption{\textbf{\lorap{} matches LoRA on length-following
without sacrificing quality; \direftp{} does not.}
LongWriter on Llama-3.1-8B-Instruct (rank $16$), four required
output-length brackets. The prefill and all-position LoRA variants are
indistinguishable on both $S_l$ (left) and $S_q$ (middle) at every
bracket. \direftp{} instead overshoots target length, writing
runaway generations that saturate the 32k-token decoding cap on most
$\geq$2k requests (right) and dragging $S_q$ below Base at long brackets.
\direfta{} lifts $S_l$ over base across all brackets while
preserving quality.}
    \label{fig:longwriter_results}
\end{figure}

\section{Discussion}
\label{sec:discussion}
\paragraph{Performance vs.~throughput as a new goal for PEFTs.} Parameter-efficient finetuning research optimises performance per trainable parameter, but our results suggest this is the wrong axis for personalised serving. Parameter count is a proxy for memory footprint at training time; what matters at \textit{inference} time is whether adapter operations land in the compute-bound prefill phase or the memory-bound decode phase. \oursystem{}s are the first to make this distinction, and the throughput gains we measure are the result of adapter operations being placed where they cost the least.

\paragraph{Relationship with KV cache-based adaptation techniques.} Besides standard adapter-style PEFTs (e.g.~LoRA) and representation-editing PEFTs (e.g.~ReFT), another approach to cheap adaptation modifies the KV cache, via learnable soft prompts \citep{li-liang-2021-prefix} or by directly learning new KV cache entries \citep{eyuboglu2025cartridgeslightweightgeneralpurposelong}. These techniques also avoid loading in adapters, at the expense of having to store and compute attention for a larger KV cache (which does add overhead on decode steps). \oursystem{}s do not add new entries to the KV cache, but both our approaches disable prefix caching across different adapters. We do not benchmark against these methods directly; based on their KV-cache overhead at decode, we expect their throughput to fall between PEFTs and PreFTs, but confirming this empirically is left to future work. These approaches are thus complementary to ours and may occupy a novel position along the accuracy--throughput tradeoff.

\paragraph{Future work.}
We emphasise that LoRA and DiReFT were originally designed assuming different position applications. In this work, we evaluate the prefill-only restriction using these existing adapter architectures unchanged, but novel architectures for such adapters are unexplored. For example, an architecture that targets attention projections to write its effect into the KV cache, or one whose parametrisation is optimised for fixed-context-length operations; these could plausibly close the remaining performance gaps we observe in \cref{sec:results}. We leave the design of such adapters to future work.

\section{Conclusion}

We introduce \oursystem{}, a family of prefill-only parameter-efficient finetuning methods designed for efficient multi-adapter serving. Our throughput gains of e.g.~$1.9\times$ the throughput of multi-LoRA when serving 512 adapters on \texttt{Llama-3.1-70B} come from placing adapter operations in more optimal positions, rather than kernel optimisations. Across model scales from 0.5B to 70B parameters, prefill-only adaptation matches all-position adapters on SFT downstream evaluations and approaches them on RLVR tasks, along with significant throughput gains. For the workloads most likely to drive personalised serving at scale (e.g.~style adaptation, per-user memory), \oursystem{} sits on a more favourable point of the accuracy--throughput frontier than existing PEFTs.

\section*{Acknowledgements}

We thank Harshit Joshi, Dilara Soylu, Yanzhe Zhang, Nikil Roashan Selvam, Jiuding Sun, and members of the Stanford \texttt{\#weekly-interp-meeting} for helpful discussion throughout the project. We thank the Palo Alto location of Molly Tea for enabling our learning rate sweeps.

\bibliographystyle{plainnat}
\bibliography{main,neurips}


\newpage
\appendix
\renewcommand \thepart{}
\renewcommand \partname{}
\noptcrule
\part{Appendix} 
\parttoc 
\newpage
\section{Implementation details}
\label{app:impl}

\subsection{LoRA}
We use the \texttt{peft} library for our LoRA finetuning. We follow the
parametrisation of LoRA used in \citet{schulman2025lora}: we set the scaling
prefactor to $\alpha / r$ with $\alpha := 32$, use Kaiming uniform
initialisation for $\mathbf{A}$, and initialise $\mathbf{B}$ as zeroes.
Unless otherwise noted, we apply LoRA to the attention projections
$\{W_q, W_k, W_v, W_o\}$ and MLP projections
$\{W_{\text{gate}}, W_{\text{up}}, W_{\text{down}}\}$ at all layers. We use
a LoRA dropout of $0$ throughout. This is because in RL, a non-zero dropout desyncs the training and rollout policies because vLLM does not see the dropout mask.
For \lorap, we use the same parametrisation and initialisation
as standard LoRA; the prefill-only restriction is enforced by
position-routing logic. Specific implementation logic differs depending on training setup. For
reinforcement learning, this is described in \cref{app:vllm}, and for
SFT, this is described in \cref{app:sft_infra}.

\subsection{ReFT}
We use a refactored version of the \texttt{pyreft} library, documented in
\cref{app:reftmodel}. Following the parametrisation we develop in
\cref{app:zero_delta}, we initialise $\mathbf{A}$ as a random matrix with
orthonormal rows, and $\mathbf{B}$ and $\mathbf{b}$ as zeroes; this gives
zero-delta initialisation (the untrained adapter applies the identity).
We apply the scaling prefactor $s_r = 1/\sqrt{r}$ from \cref{app:lr_transfer} to
enable learning rate transfer across ranks. We always
intervene on the residual stream at every layer, post-MLP. For prefill-only
\direftp, we use the same parametrisation and initialisation; only the
position set $\mathcal{P}$ described in \cref{sec:intro} differs. As with LoRA, the prefill-only restriction at training time is documented in \cref{app:vllm} for
reinforcement learning experiments.

\subsection{Supervised finetuning}
\label{app:sft_infra}

\subsubsection{Infrastructure}
For SFT, we use a patched HuggingFace \texttt{Trainer} class. Unless overridden
by a specific experiment, all SFT runs share the following settings:
\begin{itemize}
  \item \textbf{Optimiser.} AdamW with $(\beta_1, \beta_2) = (0.9, 0.999)$ and
        weight decay $0$
  \item \textbf{Schedule.} Constant learning rate. We use no warmup for
        the reasoning sweeps (GSM8K, MATH) and a warmup ratio of $0.1$
        for the long-context (LongBench-Write) sweeps.
  \item \textbf{Batch size.} $32$ sequences per optimiser step;
        this is achieved as per-device batch size $2$ with $16$ gradient
        accumulation steps on a single GPU, or per-device batch size $1$
        with $16$ gradient accumulation steps on two GPUs.
  \item \textbf{Mixed precision.} bf16 throughout; we never run ReFT in fp16.
  \item \textbf{Sequence length.} $2{,}048$ tokens for reasoning,
        $32{,}768$ tokens for LongBench-Write.
  \item \textbf{Kernels.} Flash-Attention~2 and the Liger fused
        linear--cross-entropy kernel are enabled for every run; gradient
        checkpointing is enabled for all 8B runs and disabled at smaller
        scales.
\end{itemize}

\subsubsection{Datasets}
\label{sec:datasets}

For Tülu-3 SFT, we evaluate on the following relatively simple (and thus suitable for our model scales) downstream tasks which test the ability to follow instructions along with knowledge and reasoning:

\begin{itemize}
    \item \textbf{GSM8K} \citep{gsm8k} consists of grade-school math word problems and usually requires multi-step reasoning to solve.
    \item \textbf{MMLU} \citep{mmlu} contains several domains of multiple-choice world knowledge and problem-solving questions.
    \item \textbf{IFEval} \citep{ifeval} contains a variety of verifiable constraint-satisfying tasks.
\end{itemize}

\subsection{Reinforcement learning with verifiable rewards}
\label{app:rl_infra}
\paragraph{Learning algorithm.} We use the standard GRPO
\citep{shao2024deepseekmathpushinglimitsmathematical} algorithm for RLVR,
as implemented in \texttt{trl}. We sample $K = 4$ completions per group
with temperature $1.0$ and top-$p = 1.0$, use a per-device batch size of
$4$ prompts with $8$ gradient accumulation steps for an effective $32$
prompts per optimiser step, train for one epoch with a warmup ratio of
$0.1$, and do not use a KL penalty (i.e.~$\beta = 0$). Maximum prompt
length is $512$ tokens for GSM8K and $2{,}048$ tokens for MATH; maximum
completion length is $2{,}048$ tokens for GSM8K and $4{,}096$ for MATH.

\subsubsection{Reward function} 
\label{app:reward_function}
Across all three tasks we use a binary
accuracy reward, of $1.0$ for correct answers and $0.0$ otherwise, with no
format reward, intermediate reasoning credit, or length shaping. Train-time
and eval-time scoring share the same code path so that reward magnitudes
correspond directly to test accuracy. We will continue by describing the unique reward functions for each dataset trained.

\paragraph{MATH.} We extract the predicted answer with the
\texttt{math\_verify} library, which anchors on the last
\verb|\boxed{...}| in the completion and compares to the gold answer for
symbolic equivalence (which handles LaTeX formatting). On
\texttt{math\_verify} parse failures we fall back to a legacy
\verb|\boxed{...}|-aware extractor combined with a string match that
strips \verb|$|, commas, and whitespace.

\paragraph{GSM8K. } GSM8K answers typically are integers or simple decimals, so we use the canonical extractor: split on the \verb|####| delimiter and take the first numeric match after it, falling back to the last number in the completion when
the delimiter is absent.

\paragraph{HumanEval.} Here, the reward is unit-test based: we splice the model's completion into the function signature provided in the prompt, execute the resulting program against the task's hidden test cases in a sandboxed subprocess, and award $1.0$ reward if every test asserts. Syntax errors, runtime exceptions, and timeouts all return $0.0$ reward. There is no partial credit for ``code that compiles''.

\subsection{Inference}
\label{app:inference_implementation}
For all benchmarking and serving experiments, we use a fork of
\texttt{vllm} with multi-LoRA and multi-ReFT support. Implementation
details of the position-mask buffer, CUDA-graph
integration, and the weight-sync RPC for on-policy RL for the fork are in
\cref{app:integration,app:graph-safe,app:weight-sync}.

\subsection{Evaluation infrastructure}
\label{app:eval_infra}
After training, we evaluate with greedy sampling (temperature $0.0$),
generating a single completion per evaluation prompt and computing
rewards identically to training. We evaluate every checkpoint on the
full test split rather than a held-out subsample, except where noted in
the experiment-specific appendices.

\subsection{Benchmarking infrastructure}
\label{app:bench_infra}
For all serving benchmarks, we replicate the synthetic workload
introduced by Punica \citep{punica}: prompt and output lengths
are drawn from the lognormal/uniform distribution they propose, and
each request is tagged with one of $N$ adapters under a chosen
popularity mix (uniform unless noted, with skewed and identical mixes
used in ablations). We drive our \texttt{vllm} fork directly so we can
attribute wall time to the prefill and decode phases separately,
admitting new requests into the engine one at a time up to a fixed
concurrency $B$. Each run is preceded by a warmup pass over the same
workload to absorb CUDA-graph capture and any one-off setup costs, so
the timed numbers reflect steady-state serving rather than start-up
overhead. We report per-token prefill and decode latency (median, with
$p_{90}$ and $p_{99}$ tails) and end-to-end token throughput, all
measured in bf16. The full workload, engine configuration, and per-figure sweep settings are documented in \cref{app:benchmarking}.

\subsection{Hardware}
\label{app:hardware_usage}
SFT and RLVR experiments on Llama-3.2-1B-Instruct and Qwen2.5-0.5B were
run on a single H100~80GB GPU per job (training and the colocated vLLM
engine share the GPU; vLLM is allocated $40\%$ of GPU memory and the
trainer takes the remainder). Llama-3.1-8B RLVR runs use four
H100~80GBs per job: vLLM serves the policy on two GPUs with tensor
parallelism (\texttt{TP}=2) over an HTTP/NCCL bridge, and the trainer
runs on the other two GPUs under DDP (per-device batch size $4$,
gradient accumulation $8$, effective $32$ prompts per step over the two
data-parallel ranks). Llama-3.1-8B SFT runs use two H100s with FSDP
\texttt{FULL\_SHARD} and transformer-based auto-wrap. Multi-GPU
benchmarks for Llama-3.1-70B and Qwen3-30B-A3B use four H100s with
tensor parallelism, as described in \cref{app:benchmarking}.

\newpage
\section{Statistical testing}
\label{app:statistics}
\paragraph{CMH test.} To compare \oursystem{}s and PEFTs that are the same architecture and parameter count, trained on the same dataset, applied to the same model, and evaluated on a set of $b$ downstream benchmarks, we use the \textbf{Cochran--Mantel--Haenszel} (CMH) test \citep{cochran1954some,mantel1959statistical}.

McNemar's test \citep{mcnemar1947note} is a nonparametric test used to determine whether two binary outcomes (binary correctness on the same benchmark question, in our case) differ in their marginal proportions. CMH is a generalisation of McNemar's test where one has $K$ strata of such binary-outcome paired data (in our case, multiple benchmarks), which controls for stratum before testing for association.

\paragraph{Wilcoxon signed-rank test.} This is a standard nonparametric test for comparing paired differences between two populations, and thus suitable for our prefill vs.~all-position comparisons in aggregate.

\newpage
\section{ReFT without Regret}
\label{app:init}

Two useful properties of LoRA that make learning predictable are: \textbf{zero-delta initialisation} (i.e.~applying an untrained LoRA does not change the model output) and \textbf{learning rate transfer} across ranks. These were not proposed for ReFT previously, so we design an initialisation and parametrisation of both DiReFT and LoReFT that results in these properties, which we use in all our experiments.

\subsection{Zero-delta initialisation} 
\label{app:zero_delta}

The original implementation of ReFT uses default Kaiming uniform initialisation for the weight matrices, and orthonormal Householder parametrisation for $\mathbf{R}$. This means that at initialisation, ReFT applies a non-identity function and thus immediately displaces the model from its original behaviour. Early training is thus wasted on recovering baseline performance.

Like LoRA, we thus seek to set an initialisation such that both DiReFT and LoReFT apply the identity function. To do this, for LoReFT we set $\mathbf{W}^{(0)} := \mathbf{R}^{(0)}$, $\mathbf{b} := \mathbf{0}$, and for DiReFT we set $\mathbf{B}$ to have orthonormal rows at init (but no constraint afterwards), $\mathbf{A} := \mathbf{0}$, $\mathbf{b} := \mathbf{0}$.

\begin{figure}
    \centering
    \includegraphics[width=\linewidth]{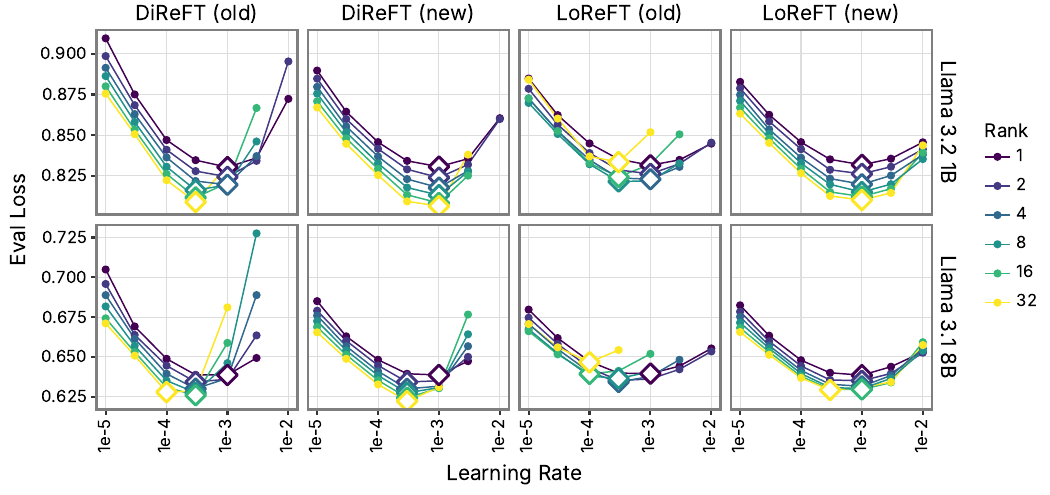}
    \caption{SFT of Llama 3.2 1B Instruct and Llama 3.1 8B Instruct on Tülu-3, comparing old and new parametrisations of DiReFT and LoReFT across varying LR and rank.}
    \label{fig:init-sft}
\end{figure}

\subsection{Learning rate transfer}
\label{app:lr_transfer}
A useful property of LoRA is that initialisation can be set to ensure that the optimal LR (approximately) transfers across ranks $r$, reducing the needed hyperparameter sweeps. This is achieved via applying a scaling prefactor on the LoRA $s_r = \{1, 1/r, 1/\sqrt{r}\}$, chosen depending on details of initialisation.\footnote{For LoRA, \citet{lora} originally use $1/r$ as the scaling prefactor (adopted by \citealp{schulman2025lora}, but with the observation that the LR transfer is imperfect), but later \citet{kalajdzievski2023rankstabilizationscalingfactor} proposes $1/\sqrt{r}$. \citet{chen2026learningratescalinglora} shows that the appropriate prefactor depends on initialisation.} We introduce a scaling prefactor $s_r$ for ReFTs:
\begin{align}
\Phi_{\mathsf{LoReFT}}(\mathbf{h}) &= \mathbf{h} + s_r\mathbf{R}^\top(\mathbf{W}\mathbf{h} + \mathbf{b} - \mathbf{R}\mathbf{h}) \\
\Phi_{\mathsf{DiReFT}}(\mathbf{h}) &= \mathbf{h} + s_r\mathbf{B}^\top(\mathbf{A}\mathbf{h} + \mathbf{b})
\end{align}
To achieve LR transfer across ranks, we propose setting $s_r$ as below:
\begin{theorem}
\label{theorem_init}
Given the following expression for DiReFT, with scaling prefactor $s_r$ being a constant:
\begin{equation}
\delta(\mathbf{h}) = s_r\mathbf{B}^{\top}(\mathbf{A}\mathbf{h} + \mathbf{b})
\end{equation}
to ensure that the $\ell_2$-norm of the delta $\lVert \delta(\mathbf{h}) \rVert$ at the first step of optimisation under Adam is invariant to the DiReFT or LoReFT rank $r$, the scaling prefactor ought to be: $$\boxed{s_r \propto \frac{1}{\sqrt{r}}}$$
\end{theorem}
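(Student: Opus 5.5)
The plan is to compute the delta after one Adam step from the zero-delta initialisation of \cref{app:zero_delta}, and read off how its norm scales with $r$. At initialisation $\mathbf{A}^{(0)} = \mathbf{0}$, $\mathbf{b}^{(0)} = \mathbf{0}$, and $\mathbf{B}^{(0)}$ has orthonormal rows, so $\delta^{(0)}(\mathbf{h}) = \mathbf{0}$. Writing the loss gradient with respect to the output as $\mathbf{g} = \partial \mathcal{L}/\partial \delta \in \mathbb{R}^d$, the parameter gradients at step $0$ are:
\begin{itemize}
  \item $\nabla_{\mathbf{A}} = s_r \mathbf{B}^{(0)} \mathbf{g}\mathbf{h}^\top$ and $\nabla_{\mathbf{b}} = s_r \mathbf{B}^{(0)}\mathbf{g}$, both rank-one or $r$-dimensional;
  \item $\nabla_{\mathbf{B}} = s_r(\mathbf{A}^{(0)}\mathbf{h}+\mathbf{b}^{(0)})\mathbf{g}^\top = \mathbf{0}$.
\end{itemize}
Hence only $\mathbf{A}$ and $\mathbf{b}$ move at the first step. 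After the step, $\delta^{(1)}(\mathbf{h}) = s_r \mathbf{B}^{(0)\top}(\Delta\mathbf{A}\,\mathbf{h} + \Delta\mathbf{b})$ to first order.

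The key simplification is that the first Adam step is sign-like. With bias correction, $\hat m/\sqrt{\hat v} = \sign(\nabla)$ entrywise (ignoring $\epsilon$), so
\[
\Delta\mathbf{A} = -\eta\,\sign(\mathbf{u})\,\sign(\mathbf{h})^\top, \qquad \Delta\mathbf{b} = -\eta\,\sign(\mathbf{u}), \qquad \mathbf{u} := \mathbf{B}^{(0)}\mathbf{g}.
\]
The factor $s_r$ is absorbed by the sign and does not enter the update; this is the crucial difference from SGD. Consequently
\[
\Delta\mathbf{A}\,\mathbf{h}+\Delta\mathbf{b} = -\eta\,(\lVert\mathbf{h}\rVert_1 + 1)\,\sign(\mathbf{u}) \in \mathbb{R}^r,
\]
whose $\ell_2$-norm is exactly $\eta(\lVert\mathbf{h}\rVert_1+1)\sqrt{r}$, since every entry of $\sign(\mathbf{u})$ is $\pm1$ (generically nonzero). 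Because $\mathbf{B}^{(0)}$ has orthonormal rows, $\mathbf{B}^{(0)\top}$ is an isometry from $\mathbb{R}^r$ into $\mathbb{R}^d$. Therefore
\[
\lVert\delta^{(1)}(\mathbf{h})\rVert = s_r\,\eta\,(\lVert\mathbf{h}\rVert_1+1)\sqrt{r}.
\]
Requiring this to be independent of $r$ forces $s_r \propto 1/\sqrt{r}$.

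The LoReFT case is handled the same way. Initialise $\mathbf{W}^{(0)}=\mathbf{R}^{(0)}$ and $\mathbf{b}^{(0)} = \mathbf{0}$, so the inner term vanishes. The only nonzero first-step gradients are on $\mathbf{W}$ and $\mathbf{b}$: the $\mathbf{R}$-gradient from the outer factor is multiplied by the zero inner term, and the inner $-\mathbf{R}\mathbf{h}$ contribution I would either treat as fixed at the first step or argue cancels to leading order. The same sign-update computation then applies, with $\mathbf{R}^{(0)\top}$ as the isometry.

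The main obstacle is justifying the idealisations:
\begin{itemize}
  \item the $\epsilon$ in Adam, which I would take to be negligible relative to $|\nabla|$;
  \item the genericity assumption that no entry of $\mathbf{u}$ is zero;
  \item treating a single example rather than a batch, where the gradient is a sum over tokens and the sign is taken of the aggregate. The product structure $\sign(\mathbf{u})\sign(\mathbf{h})^\top$ then breaks, so I would argue only that entries of $\Delta\mathbf{A}$ are still $\pm\eta$. In that case $\Delta\mathbf{A}\,\mathbf{h}$ is a sum of $r$ comparable coordinates, and its norm still scales as $\sqrt{r}$ under the natural assumption that the coordinates have $r$-independent typical magnitude.
\end{itemize}
Under these assumptions the norm is $\Theta(s_r\sqrt{r})$, which again yields the boxed prescription.
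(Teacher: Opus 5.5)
Your DiReFT argument is the paper's proof step for step: the orthonormal write matrix gets zero gradient at step zero, and the sign-like first Adam step gives $\mathbf{A}^{(1)}\mathbf{h}+\mathbf{b}^{(1)} = -\eta(\lVert\mathbf{h}\rVert_1+1)\sign(\mathbf{u})$ with norm $\eta(\lVert\mathbf{h}\rVert_1+1)\sqrt{r}$. The isometry of $\mathbf{B}^{(0)\top}$ then yields $s_r \propto 1/\sqrt{r}$, and your choice of which matrix is zero-initialised is more self-consistent than the paper's own labelling. The paper does not treat LoReFT separately, and your remark that the $-\mathbf{R}\mathbf{h}$ term ``cancels'' is wrong: an unconstrained $\mathbf{R}$ gets the nonzero first-step gradient $-s_r\mathbf{R}\mathbf{g}\mathbf{h}^\top$, whose sign update adds to the $\mathbf{W}$ update, giving inner term $-\eta(2\lVert\mathbf{h}\rVert_1+1)\sign(\mathbf{u})$, though the $\sqrt{r}$ scaling and the conclusion still hold to leading order in $\eta$.
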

\begin{proof}
Consider DiReFT at the first optimiser step under Adam. We have:
\begin{equation}
    \delta(\mathbf{h}) = s_r\mathbf{B}^{\top}(\mathbf{A}\mathbf{h} + \mathbf{b})
\end{equation}
where $\mathbf{B}^{(0)} = \mathbf{0}$, $\mathbf{b}^{(0)} = \mathbf{0}$ at initialisation.

Under Adam, given a loss function $\mathcal{L}$ the first update is:
\begin{equation}
\Delta \mathbf{W}^{(0)} = -\eta\frac{\nabla_\mathbf{W}\mathcal{L}}{\lvert \nabla_\mathbf{W}\mathcal{L} \rvert + \epsilon}
\end{equation}
For notation, we now define:
\begin{align}
\mathbf{g} &= \nabla_\delta \mathcal{L} \big|_{\delta = 0}\\
\mathbf{z}^{(i)} &= \mathbf{B}^{(i)}\mathbf{g} \\
\mathbf{a}^{(i)} &= \mathbf{A}^{(i)}\mathbf{h} + \mathbf{b}^{(i)}
\end{align}
Now, after the first optimisation step under Adam, before which $\delta^{(0)} = 0$ due to initialisation, we have:
\begin{align}
\mathbf{A}^{(1)} &= \mathbf{A}^{(0)} - \eta\frac{\nabla_{\mathbf{B}}\mathcal{L}}{\lvert \nabla_{\mathbf{B}}\mathcal{L} \rvert + \epsilon} = -\eta\frac{s_r\mathbf{z}^{(0)}\mathbf{h}^\top}{s_r\lvert \mathbf{z}^{(0)}\mathbf{h}^\top \rvert + \epsilon} \\
\mathbf{b}^{(1)} &= \mathbf{b}^{(0)} - \eta\frac{\nabla_{\mathbf{b}}\mathcal{L}}{\lvert \nabla_{\mathbf{b}}\mathcal{L} \rvert + \epsilon} = -\eta\frac{s_r\mathbf{z}^{(0)}}{s_r\lvert \mathbf{z}^{(0)} \rvert + \epsilon} \\
\mathbf{B}^{(1)} &= \mathbf{B}^{(0)} - \eta\frac{\nabla_{\mathbf{A}}\mathcal{L}}{\lvert \nabla_{\mathbf{A}}\mathcal{L} \rvert + \epsilon} = \mathbf{B}^{(0)}
\end{align}
This means that the delta after the first step is:
\begin{equation}
\delta^{(1)}(\mathbf{h}) = s_r\mathbf{B}^{(0)\top}\left(-\eta\frac{s_r\mathbf{z}^{(0)}\mathbf{h}^\top}{s_r\lvert \mathbf{z}^{(0)}\mathbf{h}^\top \rvert + \epsilon}\mathbf{h} -\eta\frac{s_r\mathbf{z}^{(0)}}{s_r\lvert \mathbf{z}^{(0)} \rvert + \epsilon}\right)
\end{equation}
We want to compute the $\ell_2$-norm of this expression. To make things simpler, we focus on the similarity scores $\mathbf{a}^{(i)} \in \mathbb{R}^r$. Coordinatewise, we have:
\begin{align}
\mathbf{a}_i^{(1)} &= (\mathbf{A}^{(1)}\mathbf{h})_i + \mathbf{b}^{(1)}_i\\
&= \left(\sum_{j=1}^d{\mathbf{A}_{ij}^{(1)}\mathbf{h}_j}\right)  + \mathbf{b}^{(1)}_i \\
&= \left(\sum_{j=1}^d{-\eta\frac{s_r\mathbf{z}^{(0)}_i\mathbf{h}_j^2}{s_r\lvert \mathbf{z}^{(0)}_i\mathbf{h}_j \rvert + \epsilon}}\right) - \eta\frac{s_r\mathbf{z}_i}{s_r\lvert \mathbf{z}_i \rvert + \epsilon} \\
&= -\eta s_r \mathbf{z}_i^{(0)} \left(\left(\sum_{j=1}^d{\frac{\mathbf{h}_j^2}{s_r\lvert \mathbf{z}^{(0)}_i\mathbf{h}_j \rvert + \epsilon}}\right) + \frac{1}{s_r\lvert \mathbf{z}_i \rvert + \epsilon}\right) \\
\end{align}
Now we assume that $\epsilon$ is negligible in the denominators, and the expression simplifies to SignSGD:
\begin{align}
\mathbf{a}_i^{(1)} &\approx -\eta s_r \mathbf{z}_i^{(0)} \left(\left(\sum_{j=1}^d{\frac{\mathbf{h}_j^2}{s_r\lvert \mathbf{z}^{(0)}_i\mathbf{h}_j \rvert}}\right) + \frac{1}{s_r\lvert \mathbf{z}_i \rvert}\right)\\
&= -\eta\sign(\mathbf{z}^{(0)}_i)\left(1 + \sum_{j=1}^d{\lvert \mathbf{h}_j \rvert}\right)\\
&= -\eta\sign(\mathbf{z}^{(0)}_i)\left(1 + \lVert \mathbf{h} \rVert_1\right)
\end{align}
Therefore, the $\ell_2$-norm of $\mathbf{a}^{(1)}$ is:
\begin{equation}
    \lVert \mathbf{a}^{(1)}\rVert_2 = \eta(\lVert \mathbf{h} \rVert_1 + 1)\sqrt{r}
\end{equation}
Using this and the fact that the rows of $\mathbf{B}^{(0)}$ are orthonormal at initialisation (and therefore $\mathbf{B}^{(0)\top}$ is an isometry on $\mathbb{R}^r$), we can compute the $\ell_2$-norm of the delta:
\begin{align}
\lVert \delta^{(1)}(\mathbf{h})\rVert_2 &= s_r\lVert \mathbf{B}^{(0)\top}\mathbf{a}^{(1)}\rVert_2\\
&= s_r \lVert \mathbf{a}^{(1)} \rVert_2\\
&= s_r \eta(\lVert \mathbf{h} \rVert_1 + 1)\sqrt{r}
\end{align}
Therefore, to make the norm of the first delta invariant to rank (modulo the assumption that the Adam epsilon is relatively small), we should set
\begin{equation}
\boxed{s_r \propto \frac{1}{\sqrt{r}}}
\end{equation}
\end{proof}

\paragraph{Validation on SFT.} To confirm that our changes enable optimal LR transfer across ranks for both DiReFT and LoReFT, we use the SFT evaluation from \citet{schulman2025lora}: comparing DiReFT and LoReFT with both original parametrisation and with our new initialisation and scaling prefactor, we finetune Llama 3.2 1B Instruct and Llama 3.1 8B Instruct on the Tülu-3 instruction-tuning dataset \citep{lambert2025tulu3pushingfrontiers}. We intervene on all positions (including decode steps) to ensure a fair comparison to LoRA.\footnote{\citet{wu2024reft} only intervene on some token positions in prefill; this is a useful property of ReFTs but artificially limits their learning capacity, making comparison to LoRA more difficult since it modifies all steps.} We train on $50,000$ examples and evaluate language modelling loss on a held-out set of $500$ samples. We truncate sequences to $2,048$ tokens. We sweep learning rates in $\{10^{-5}, \ldots, 10^{-2}\}$ and ranks in $\{1, 2, 4, 8, 16, 32\}$. We train with the Adam optimiser, with an effective batch size of $32$ (batch size of $2$ with $16$ gradient accumulation steps).

Our results in \cref{fig:init-sft} confirm that the optimal LR near-perfectly transfers across ranks with the new parametrisation ($10^{-3}$ generally, $3 \cdot 10^{-4}$ for DiReFT on 8B) but not the old one; our changes also fix a pathology in the old LoReFT where scaling rank did \textit{not} monotonically improve performance, probably due to a lack of zero-delta initialisation corrupting the model's starting performance.
\newpage
\section{vLLM Integration}
\label{app:vllm}

This appendix expands on Section~\ref{sec:vllm}. We describe how a trained
ReFT adapter reaches a live vLLM worker, how the decoder layer applies the
intervention without breaking \texttt{torch.compile} or CUDA graphs, and how
adapter weights are refreshed during on-policy RL training. The end-to-end
data flow is summarised in Figure~\ref{fig:vllm-arch}.
\input{figs/appendix_vllm_fig}

\subsection{The refactored \texttt{ReftModel} interface}
\label{app:reftmodel}

We refactored the user-facing \texttt{pyreft} API around a single
\texttt{ReftModel} class modelled on PEFT's \texttt{PeftModel}. A
\texttt{ReftModel} wraps a HuggingFace causal-LM, freezes its base weights,
and injects intervention layers at the indices specified by a
\texttt{ReftConfig}. The class exposes the interface a user would expect
from a PEFT adapter---%
\texttt{save\_pretrained} / \texttt{from\_pretrained} for adapter-only
checkpoints, \texttt{print\_trainable\_parameters} for diagnostics,
\texttt{enable\_adapters} / \texttt{disable\_adapters} for ablations. The refactor also
transparently delegates unknown attribute lookups to the wrapped base model
so that training frameworks (HF \texttt{Trainer}, TRL, \texttt{lm-eval})
work unmodified.

Three methods handle the vLLM integration path:
\begin{itemize}
  \item \texttt{export\_vllm\_reft\_spec()} extracts a serialisable blueprint
        of the adapter (class path, constructor kwargs, and CPU state dict)
        plus the set of intervention layer indices and the position schedule.
  \item \texttt{build\_vllm(model\_name, **kwargs)} instantiates a vLLM
        \texttt{LLM} with this blueprint attached to \texttt{VllmConfig}
        and synchronises trained weights into the engine.
  \item \texttt{build\_vllm\_from\_checkpoint(ckpt\_dir, model\_name)} skips
        the HF model entirely, reading only the adapter config and weights
        from disk before handing them to vLLM. This was written to be useful at inference time,
        where holding both an HF copy and a vLLM copy of an 8B+ parameter
        base model wastes GPU memory.
\end{itemize}
Together these three calls are the full public surface area between pyreft
and the vLLM fork; everything else in this appendix is internal.

\subsection{Engine integration}
\label{app:integration}

vLLM's \texttt{VllmConfig} carries per-engine configuration from the host
process to every worker. We add a \texttt{reft\_config} field to this
dataclass that holds the serialised adapter blueprint. At model construction
time, the  causal-LM classes check for a non-empty
\texttt{reft\_config} and, if present, swap their stock decoder-layer class
for a ReFT-aware subclass produced by a factory function. The subclass instantiates an adapter per target
layer from the blueprint, and its \texttt{forward} invokes the base layer
unchanged before adding the intervention delta to the residual stream.
Because the subclass is generated from a shared blueprint rather than
deserialised from a pickled module, worker processes can reconstruct
adapters that contain orthogonal parametrisations (which cannot be pickled
directly) with no special handling.

The engine's weight loader is also patched to mark adapter parameters as
expected-present; otherwise vLLM's checkpoint validator would raise because
the adapter weights are not in the base model's \texttt{safetensors}.

\subsection{Graph-safe execution}
\label{app:graph-safe}

vLLM compiles each decoder layer's forward pass with \texttt{torch.compile}
and wraps it in CUDA graphs for decode. Naively intervening in this path
would either break compilation (dynamic Python branching) or produce stale
results (captured graphs referring to tensors that have since been freed).
We use three design choices to stay compile- and capture-safe.

\paragraph{Adapter inlined, not custom-opped.}
The delta $\Delta = \phi_{\text{adapter}}(h{+}r)$ is computed inline in the
subclassed \texttt{forward}. We did not wrap it in a custom op, because
doing so forced a graph split that cost more than the op saved on simple
LoReFT/DiReFT interventions.

\paragraph{Position mask via pre-computed buffer.}
ReFT applies its delta only at prefill positions (or at all tokens,
depending on the position schedule). Computing the mask inside the compiled
forward would require branching on \texttt{attn\_metadata} fields whose
concrete types differ across attention backends. We instead compute the
mask \emph{outside} the compiled region, in the model runner, before each
forward: the runner reads \texttt{query\_start\_loc} from the attention
metadata to find the prefill/decode split, computes one mask per unique
position schedule, and writes it into a fixed-address \texttt{nn.Buffer}
on each layer (\texttt{\_reft\_position\_mask}). The compiled layer reads
this buffer by slicing, which is a pure tensor op and therefore safe under
capture. Deriving the split from \texttt{query\_start\_loc} rather than
backend-specific fields makes the mask correct under both full and chunked
prefill on every attention kernel vLLM ships. This entire process is skipped when tokens are all marked as decode tokens, speeding up computation.

\paragraph{Graph-safe caches for derived tensors.}
Some adapter quantities are expensive to compute but rarely change; for example, the
Householder product $R$ that materialises the orthogonal rotation for
LoReFT. We compute these once outside the compiled region and store them
in \texttt{nn.Buffer}s at fixed addresses. The compiled forward reads the
buffers directly; no SVD or matrix decomposition ever runs inside a CUDA
graph. When adapter weights change (during RL) these caches are refreshed
by a separate \texttt{collective\_rpc} call (\S\ref{app:weight-sync}).

\subsection{Weight synchronisation for on-policy RL}
\label{app:weight-sync}

On-policy methods such as GRPO require the inference engine's adapter to
track the training adapter step-for-step. Restarting the vLLM engine after
every gradient update is prohibitive, so we expose a
\texttt{collective\_rpc("sync\_reft\_weights", ...)} RPC on the vLLM worker
that accepts a \texttt{\{layer\_idx: state\_dict\}} mapping, loads each
adapter's state dict in place, and refreshes the graph-safe caches of
\S\ref{app:graph-safe}. Because adapter parameters live at stable memory
addresses, the next graph replay sees the new weights without
recapture (similar to how LoRA updates propagate).

\texttt{ReftModel.sync\_to\_vllm(llm)} wraps this RPC on the training side:
it extracts the current adapter state dicts from the HF model, ships them to
the engine, and returns the per-worker sync counts for logging. A typical
RL loop therefore looks like: generate rollouts with the vLLM engine,
compute advantages and take a gradient step on the HF model, call
\texttt{sync\_to\_vllm}, and repeat. Engine startup happens once.

\subsection{\texttt{ReFTRequest}s}
Our forked vLLM adds native support for ReFT adapters as a first-class serving primitive, analogous to how stock vLLM handles LoRA requests. At engine construction time, a \emph{ReFT spec}, which maps transformer layer indices to adapter weights and the intervention position, is registered via \texttt{reft\_config}. At inference time, callers attach a \texttt{ReFTRequest} to each generation call, specifying which adapter to apply; a single vLLM instance can thus serve multiple ReFT adapters by routing different requests to each adapter, just like LoRA serving.

\subsection{Prefill-only LoRA}
Standard vLLM LoRA applies the adapter at every forward pass regardless of generation phase. To support prefill-only LoRA, we extend \texttt{LoRARequest} with a \texttt{lora\_position} field. When \texttt{lora\_position="prefill"} is set, the vLLM LoRA kernel applies the delta only during the prefill phase and bypasses it during decode steps, so generated tokens see only the frozen base-model weights. On the training side (HF/PyTorch), we implement equivalent semantics.

\subsection{Scope and limitations}
\label{app:limits}

The fork currently supports Llama and Qwen2 family decoders, LoRA, DiReFT, and
LoReFT adapters with either a \emph{prefill} or \emph{all-tokens} position
schedule, and tensor-parallel execution on a single node. Speculative
decoding, pipeline parallelism, and non-Llama/Qwen2 architectures have not
been tested and are likely to need additional integration work along the
lines of \S\ref{app:integration}.

\newpage
\section{Benchmarking infrastructure}
\label{app:benchmarking}

This appendix expands on \cref{sec:vllm}. We describe the workload,
the engine configuration, the FCFS scheduling loop, the metrics we report,
and the sweeps that produce each figure in the main text.

\subsection{Workload}
\label{app:bench_workload}

We replicate the request distribution used by Punica
\citep{punica} so that our results sit on the same axes as the
established multi-adapter serving literature.

\paragraph{Length distribution.} Prompt lengths are drawn i.i.d.\ from the following distribution:

$$\mathrm{Lognormal}(\sigma{=}0.8,\,\mathrm{loc}{=}{-}1.0,\,\mathrm{scale}{=}18.0)$$
and clipped to $[1,\,L_{\max}{-}2]$. Each request's total length
(prompt + completion) is drawn uniformly from $[p{+}2,\,L_{\max}]$; the
output length is the difference. We use $L_{\max}=2{,}048$ throughout
unless otherwise noted. Sampling is deterministic for a given seed.

\paragraph{Adapter assignment.} Each request is tagged with an adapter
index drawn from one of four mixes: \emph{identical} (all requests share
adapter $0$), \emph{uniform} (uniform over $N$ adapters; this is our
default), \emph{skewed} (Zipf with $\alpha = 1.0$ over the adapter
catalogue), and \emph{distinct} (request $i$ uses adapter
$i \bmod N$, then shuffled). Most figures report uniform popularity;
the skewed and distinct mixes are used for the popularity-sensitivity
ablations.

\paragraph{Adapter weights.} For LoRA we materialise $N$ random adapters
per layer at the requested rank and target modules; $\mathbf{A}$ and
$\mathbf{B}$ are i.i.d.\ $\mathcal{N}(0,\,0.01^2)$ in fp16. For ReFT we
materialise $N$ adapters of the requested type (\loreft, \direft) by
constructing a CPU adapter at the requested rank and position and
adding $\mathcal{N}(0,\,\sigma_{\text{perturb}}^2)$ noise
($\sigma_{\text{perturb}} = 0.1$) to the learned source weight and bias
to break symmetry across adapters. These weights are not trained:
benchmarks measure serving cost, not task quality.

\paragraph{Prompt content.} Prompt token IDs are drawn uniformly from
$[100,\,32000)$. Sampling is greedy ($T = 0$, $n = 1$) with
\texttt{ignore\_eos = True} and \texttt{max\_tokens} fixed to the
generated output length, so generations always run to the requested
length and timing is not perturbed by stochastic stopping.

\subsection{Engine configuration}
\label{app:bench_engine}

We construct an \texttt{LLMEngine} from \texttt{EngineArgs} with
prefix caching disabled, \texttt{max\_num\_seqs} set to the FCFS
batch size, \texttt{max\_model\_len} $= L_{\max}$, and
\texttt{gpu\_memory\_utilization} $= 0.9$. We never use
\texttt{enforce\_eager} unless explicitly noted, so all numbers in
the main text include CUDA-graph capture.

For LoRA runs we set \texttt{enable\_lora=True},
\texttt{max\_loras=}$M_{\text{active}}$ (the maximum number of distinct
adapters that may be co-batched), \texttt{max\_lora\_rank=}$\max(8,r)$,
and \texttt{max\_cpu\_loras=}$N+1$ so the entire catalogue fits in CPU
memory and only the active set occupies GPU memory at any time. For
ReFT runs we set \texttt{enable\_reft=True}, \texttt{max\_refts=}$M_{\text{active}}$,
and \texttt{max\_cpu\_refts=}$N+1$, then call
\texttt{collective\_rpc("load\_reft\_adapter", \dots)} once per adapter
to push it through the public weight-sync path described in
\cref{app:weight-sync}. For the adapter-less baseline we leave both
features disabled but otherwise reuse the same engine configuration
and the same workload.

Tensor parallelism is set per model: $\text{TP} = 1$ for Qwen2.5-0.5B
and Llama-3.1-8B benchmarks (single H100), and $\text{TP} = 4$ for the
Llama-3.1-70B and Qwen3-30B-A3B benchmarks (four H100s on a single
node). All inference uses bf16 weights.

\subsection{Scheduling loop}
\label{app:bench_loop}

The benchmark drives the engine through \texttt{LLMEngine.step()} rather
than calling \texttt{LLM.generate}, so that we can attribute latency to
the prefill (encode) and decode phases separately. The driver maintains
a \emph{workset} of at most $B$ in-flight requests and admits new
requests one at a time whenever a slot frees up; admission is
first-come-first-served over the synthetic stream. For each in-flight
request we record:
\begin{itemize}
  \item its \emph{encode latency} as the wall time between
        \texttt{add\_request} and the step in which the first output
        token is observed;
  \item its \emph{decode latency} as the wall time between the first
        output token and the step that marks the request finished;
  \item its prompt and output lengths (from the workload, not from
        engine state).
\end{itemize}
The engine's own scheduler is responsible for prefill/decode chunking
and continuous batching --- we do not change vLLM's scheduling policy.
A separate warmup run with the same workload (different seed) is
executed before every timed run and discarded; this absorbs CUDA-graph
capture, weight loading, and any one-off allocator behaviour so the
timed run reflects steady-state serving cost.

\subsection{Metrics}
\label{app:bench_metrics}

We report four quantities per run:
\begin{itemize}
  \item \textbf{Per-token encode latency}, computed per request as
        encode latency divided by prompt length, then aggregated as the
        $p_{50}$, $p_{90}$, $p_{99}$, mean, and standard deviation
        across requests.
  \item \textbf{Per-token decode latency}, defined analogously over
        output tokens.
  \item \textbf{End-to-end throughput}, the total number of prompt and
        output tokens served divided by the total wall time of the
        timed run.
  \item \textbf{Per-request encode and decode latency}, with the same
        percentile breakdown, useful for tail-latency comparisons.
\end{itemize}
Unless otherwise noted, plots in the main text show median per-token
latency with shaded $p_{10}$--$p_{90}$ bands and median throughput.

\newpage

\section{LongWriter evaluation}
\label{app:longwriter-scoring}

We use the LongBench-Write benchmark of \citet{longwriter}: 120 open-ended
writing prompts paired with a target output length. We partition results into five
required-length buckets ($<\!500$, $500$--$2$k, $2$k--$4$k, $4$k--$20$k,
$20$k$+$ words) to gauge performance on each of the buckets. The headline metrics are a length-following score
$S_l \in [0, 100]$ and a quality score $S_q \in [0, 100]$, each averaged over
the prompts in a bucket. We additionally report the fraction of generations
truncated at the decoding cap (\cref{sec:trunc}) to disentangle ``can't write
more'' from ``won't write more'' failure modes.

\paragraph{Generation.}
For every prompt $i$ with target length $\ell_i^{\text{req}}$, we sample a
single response $y_i$ at temperature $0.5$ with a 32{,}768-token output cap.
We measure produced length $\ell_i^{\text{gen}}$ in words using the
LongWriter convention: each Chinese ideograph contributes one ``word'' and
each maximal run of $[\text{a-zA-Z}]$ contributes one English word; whitespace
and punctuation are discarded.

\paragraph{Length-following score $S_l$.}
We use the piecewise score from the LongWriter paper unchanged:
\begin{equation}
S_l(\ell^{\text{req}}, \ell^{\text{gen}}) =
\begin{cases}
100 \cdot \max\!\Big(0,\, 1 - \tfrac{1}{3}\big(\tfrac{\ell^{\text{gen}}}{\ell^{\text{req}}} - 1\big)\Big)
  & \text{if } \ell^{\text{gen}} > \ell^{\text{req}} \\[4pt]
100 \cdot \max\!\Big(0,\, 1 - \tfrac{1}{2}\big(\tfrac{\ell^{\text{req}}}{\ell^{\text{gen}}} - 1\big)\Big)
  & \text{if } 0 < \ell^{\text{gen}} \le \ell^{\text{req}} \\[4pt]
0 & \text{otherwise.}
\end{cases}
\label{eq:sl}
\end{equation}
The asymmetry penalises under-shooting twice as harshly as over-shooting (a
4$\times$ overshoot reaches $S_l = 0$, while a 3$\times$ undershoot does).
Bucket-level $S_l$ is the unweighted mean over prompts whose required length
falls in that bucket.

\paragraph{Quality score $S_q$.}
We follow the LLM-as-judge protocol of \citet{longwriter}, replacing
their GPT-4 judge with \texttt{gpt-5-mini} with low reasoning for cost. For each (prompt, response)
pair the judge returns six integer scores in $\{1, \dots, 5\}$ along the
LongWriter rubric:
\textit{Relevance}, \textit{Accuracy}, \textit{Coherence}, \textit{Clarity},
\textit{Breadth and Depth}, and \textit{Reading Experience}. We use the
LongWriter judge prompt verbatim, including the instruction \textit{not} to
penalise length (so $S_l$ and $S_q$ measure orthogonal failure modes). The
judge is instructed to emit JSON; we parse and clip each dimension to
$[1, 5]$, dropping items whose JSON cannot be recovered. The per-item raw
score $\bar s_i \in [1, 5]$ is the unweighted mean across the six dimensions
(over those that parsed). We report two equivalent forms:
\[
S_q^{\text{raw}} = \frac{1}{|B|}\sum_{i \in B} \bar s_i \in [1, 5],
\qquad
S_q = \frac{S_q^{\text{raw}} - 1}{4} \cdot 100 \in [0, 100],
\]
where $B$ is the set of prompts in a given bucket. Throughout the main paper
we plot the $0$--$100$ form so $S_l$ and $S_q$ share a common scale.

\paragraph{Truncation rate.}
\label{sec:trunc}
Every generation reports a finish reason from the decoding backend
(\texttt{stop} when an EOS / stop sequence is emitted, \texttt{length} when
the 32{,}768-token cap is hit). The bucket-level truncation rate is
\[
\tau_B = \frac{|\{ i \in B : \text{finish\_reason}_i = \texttt{length} \}|}{|B|} \in [0, 1],
\]
which we report as a percentage. A response that ends voluntarily inside the
budget contributes $0$ to $\tau_B$; a response that would have continued past
$32$k tokens contributes $1$. Truncation rate complements
$S_l$ by separating two regimes: a low $S_l$ paired with low $\tau_B$ implies
the model ended early on its own; a low $S_l$ paired with high $\tau_B$
implies the model wanted to continue but ran out of decoding budget.

\newpage

\section{Comparing SFT and RL on GSM8K}
\label{sec:gsm8k-more}

\input{tabs/sft_vs_rl_r8}

To investigate the gap on GSM8K under RL further, we conduct an ablation with identical training data under both SFT and RLVR, comparing \oursystem{}s against their all-position counterparts. \cref{tab:sft_vs_rl_r8} reports final GSM8K accuracy for each (method, training regime) pair, with $\Delta$ columns showing the prefill-vs-all-position gap within each regime. In all cases, \oursystem{}s underperform PEFTs by some margin. These results suggest that the restriction to prefill positions is particularly difficult for learning on GSM8K. This also agrees with prior comparisons on math reasoning for ReFT \citep{wu2024reft}; however, this phenomenon is not replicated in MATH, which is a more difficult benchmark. We leave further investigation of this gap to future work.

\newpage

\section{Long output attenuation experiments}
\label{app:output_attenuation}

\begin{figure}[!t]
    \centering
        \includegraphics[width=\linewidth]{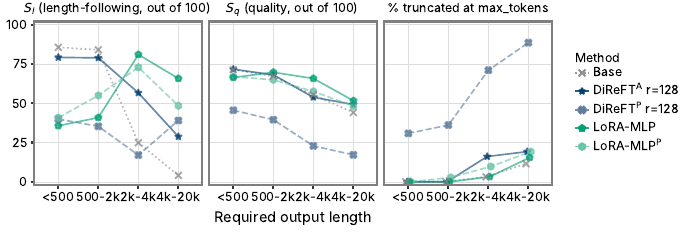}
    \caption{\textbf{\lorap{} on MLP-only still outperforms \direftp{} and \direfta{} at high ranks}
    LongWriter on Llama-3.1-8B-Instruct (rank $16$), four required
    output-length brackets.}
    \label{fig:longwriter_results_mlp}
\end{figure}

In the LongBench-Write experiments, we find that \direftp{} runs fail
while \lorap{} runs do not. We conduct two follow-up experiments to
investigate why.

\paragraph{Is the failure mediated by attention?}
Our initial hypothesis was that the asymmetry traces to \emph{where}
each adapter writes during prefill. LoRA on attention projections
modifies the K and V vectors written to the cache, so every subsequent
decode token attends to KV-cache contents directly shaped by the
trained delta. DiReFT instead writes additively to the residual stream
at prompt positions and reaches the cache only indirectly through the
un-modified K/V projections. If this distinction is what allows
\lorap{} to survive decode, then restricting LoRA's target modules
to MLP only (\texttt{gate\_proj}, \texttt{up\_proj},
\texttt{down\_proj}) should reproduce \direftp{}'s failure mode ---
neither variant would have direct access to K/V.

\cref{fig:longwriter_results_mlp} refutes this hypothesis. MLP-only
\lorap{} performs on par with MLP-only LoRA and high rank \direftp{} and \direfta{} at every length bracket
on both $S_l$ and $S_q$, and avoids the runaway-truncation regime
that \direftp{} occupies. Removing attention-projection LoRA does
not break the prefill-only setting.

\paragraph{Is the failure due to under-parameterisation?}
A second possibility is that \direftp{} simply lacks the capacity at
rank 16 to encode the conditional length-matching signal LoRA manages
at the same rank, given LoRA's larger effective parameter count across
seven target modules. We re-train \direftp{} at rank 128 and
re-evaluate. The overshoot pathology persists: \direftp{} r=128
produces $\geq$7k-word generations across every length bracket and
saturates the 32k decoding cap on more than 70\% of $\geq$2k requests.
Increased capacity does not recover the prefill-only setting.

\paragraph{Implication.}
Neither restricting LoRA to MLP modules nor scaling DiReFT to rank 128
changes the asymmetry observed in the main results. The difference
between \lorap{} and \direftp{} appears not to depend on which
projections each adapter targets, nor on rank. We conjecture the
failure is structural: LoRA's multiplicative perturbation reshapes
how prompt tokens are encoded into hidden states, and that
reshaping propagates through the KV cache to persist through decode;
DiReFT's additive residual write at prompt positions injects a
fixed signal that fails to translate into the conditional,
target-dependent decoding behaviour LongWriter requires. A precise
mechanistic characterisation is left to future work.
 
\newpage
\input{sections/opd}
\newpage
\section{Broader Societal Impacts}
\label{app:society}
\paragraph{Positive societal impacts.} PreFT improves the inference efficiency of multi-tenant adapter serving for personalized large language models, with throughput gains of up to $1.9\times$ at scale on Llama-3.1-70B. The most direct positive impact is a reduction in energy and compute costs per served request, which both lowers the environmental footprint of personalized LLM deployment and reduces the economic barrier to offering personalized applications. Cheaper multi-tenant serving particularly benefits use cases where per-user adaptation matters but serving costs have been prohibitive, including educational tools adapted to individual learners, accessibility applications tailored to specific users' communication patterns, and language- or dialect-specific adaptations for communities under-represented in base model training data. The per-adapter serving paradigm that PreFT supports is also structurally compatible with privacy-preserving personalization architectures, where user-specific weights remain isolated rather than aggregated into a shared model or passed through every inference as long-context input. By releasing our vLLM fork and refactored pyreft library, we additionally lower the engineering bar for academic and independent researchers studying multi-adapter serving, on-policy reinforcement learning with adapters, and related topics that previously required substantial systems infrastructure to investigate.

\paragraph{Negative societal impacts.} The negative impacts of PreFT are not specific to our method but apply to efficiency improvements in LLM serving generally. Lowering the cost of personalized LLM deployment increases the economic feasibility of harmful applications alongside beneficial ones, including automated disinformation, harassment, and manipulative personalization. PreFT does not introduce new model capabilities; it makes existing capabilities more efficient to serve. Concerns about misuse of personalized LLMs therefore apply to the underlying base models and to deployment decisions made by operators, rather than to our methodological contribution. We release a serving framework and adapter infrastructure; we do not release new model weights, datasets, or end-user applications that would amplify these concerns. We also note the standard Jevons-paradox observation that efficiency improvements tend to increase aggregate usage of the underlying technology, so even efficiency-driven reductions in per-request energy may be offset by increased deployment volume; this is a property of efficiency contributions in general and is not specific to PreFT.
\newpage

\input{tabs/sft_lr_rank_tables}

\newpage

\newpage
\section{Asset licenses and attribution}
\label{app:licenses}

We use the following datasets, models, and software, listed with their licenses. All assets are used in accordance with their license terms.

\paragraph{Datasets.}
\begin{itemize}
    \item \textbf{T\"ulu-3} \citep{lambert2025tulu3pushingfrontiers}: ODC-BY-1.0 (with subset-specific variations; some subsets are non-commercial). \url{https://huggingface.co/datasets/allenai/tulu-3-sft-mixture}
    \item \textbf{OpenThoughts} \citep{guha2025openthoughtsdatarecipesreasoning}: Apache 2.0. \url{https://huggingface.co/datasets/open-thoughts/OpenThoughts-114k}
    \item \textbf{GSM8K} \citep{gsm8k}: MIT.
    \item \textbf{MATH} \citep{math}: MIT.
    \item \textbf{MBPP} \citep{mbpp}: CC-BY 4.0.
    \item \textbf{HumanEval} \citep{humaneval}: MIT.
    \item \textbf{LongBench-Write} \citep{longwriter}: Apache 2.0.
\end{itemize}

\paragraph{Base models.}
\begin{itemize}
    \item \textbf{Llama-3.1, Llama-3.2}: Llama 3.x Community License (Meta).
    \item \textbf{Qwen-2.5}: Tongyi Qianwen License (Alibaba).
\end{itemize}

\paragraph{Software.}
\begin{itemize}
    \item \textbf{vLLM} \citep{vllm}: Apache 2.0. We extend vLLM with multi-ReFT serving support; our fork inherits the Apache 2.0 license.
    \item \textbf{pyreft} \citep{wu2024reft}: Apache 2.0. We refactor pyreft to support a unified \texttt{ReftModel} interface and vLLM integration; our refactor inherits the Apache 2.0 license.
    \item \textbf{trl, peft, transformers}: Apache 2.0 (Hugging Face).
\end{itemize}

\paragraph{LLM-as-judge.}
We use OpenAI's \texttt{gpt-5-mini} model via the OpenAI API as the judge model for LongBench-Write quality scoring (\cref{app:longwriter-scoring}), in accordance with OpenAI's API terms of service. The judge is used only for evaluation; no training data is generated using this model.


\end{document}

%% file: tabs/sft_prefill_vs_allpos.tex
\begin{table}[t]
\centering
\caption{\textbf{PEFTs and \oursystem{}s have no significant difference on downstream performance under SFT.} SFT of base Llama models, evaluated on T\"ulu-3, gives similar downstream scores for parameter-matched prefill vs.~all-position adapters. \texttt{*} indicates statistical significance.}
\label{tab:sft_prefill_vs_allpos}
\begin{tabular}{l c ccc ccc}
\toprule
 & & \multicolumn{2}{c}{\emph{Llama-3.2-1B (base)}} & & \multicolumn{2}{c}{\emph{Llama-3.1-8B (base)}} & \\
\cmidrule(lr){3-4} \cmidrule(lr){6-7}
\textbf{Method} & $r$ & Prefill & All-pos & $\Delta$ & Prefill & All-pos & $\Delta$ \\
\midrule
\multicolumn{2}{l}{\emph{Base (no adapter)}} & \multicolumn{2}{c}{$10.89$} & & \multicolumn{2}{c}{$13.85$} & \\
\rowcolor{gray!12} \quad DiReFT & $1$ & $14.13$ & $16.67$ & \textcolor{red!70!black}{\texttt{-2.53}\textsuperscript{*}} & $48.85$ & $46.41$ & \textcolor{green!50!black}{\texttt{+2.44}\phantom{\textsuperscript{*}}} \\
\quad DiReFT & $4$ & $16.66$ & $16.48$ & \textcolor{green!50!black}{\texttt{+0.18}\phantom{\textsuperscript{*}}} & $48.18$ & $50.60$ & \textcolor{red!70!black}{\texttt{-2.42}\textsuperscript{*}} \\
\rowcolor{gray!12} \quad LoRA & $1$ & $16.60$ & $17.70$ & \textcolor{red!70!black}{\texttt{-1.10}\textsuperscript{*}} & $46.46$ & $49.58$ & \textcolor{red!70!black}{\texttt{-3.12}\textsuperscript{*}} \\
\rowcolor{gray!12} \quad DiReFT & $16$ & $17.77$ & $17.50$ & \textcolor{green!50!black}{\texttt{+0.27}\phantom{\textsuperscript{*}}} & $49.29$ & $52.72$ & \textcolor{red!70!black}{\texttt{-3.43}\textsuperscript{*}} \\
\quad LoRA & $4$ & $19.31$ & $18.99$ & \textcolor{green!50!black}{\texttt{+0.32}\phantom{\textsuperscript{*}}} & $47.56$ & $48.87$ & \textcolor{red!70!black}{\texttt{-1.31}\textsuperscript{*}} \\
\quad DiReFT & $64$ & $17.96$ & $18.60$ & \textcolor{red!70!black}{\texttt{-0.64}\textsuperscript{*}} & $50.61$ & $52.25$ & \textcolor{red!70!black}{\texttt{-1.65}\textsuperscript{*}} \\
\rowcolor{gray!12} \quad LoRA & $16$ & $18.88$ & $18.14$ & \textcolor{green!50!black}{\texttt{+0.74}\textsuperscript{*}} & $49.06$ & $51.31$ & \textcolor{red!70!black}{\texttt{-2.25}\phantom{\textsuperscript{*}}} \\
\quad LoRA & $64$ & $19.82$ & $19.63$ & \textcolor{green!50!black}{\texttt{+0.18}\phantom{\textsuperscript{*}}} & $49.55$ & $49.66$ & \textcolor{red!70!black}{\texttt{-0.12}\phantom{\textsuperscript{*}}} \\
\multicolumn{2}{l}{\emph{Instruct (ref.)}} & \multicolumn{2}{c}{$39.57$} & & \multicolumn{2}{c}{$71.28$} & \\
\bottomrule
\end{tabular}
\end{table}

%% file: tabs/rl_new.tex
\begin{table}[t]
\centering
\caption{\textbf{\oursystem{}s nearly match PEFTs on downstream performance under RL.} We report prefill vs.~all-pos performance of both DiReFT and LoRA on three RL tasks, along with deltas. We find that \oursystem{}s approach PEFT performance on MATH and HumanEval but lag on GSM8K. \texttt{*} indicates statistical significance.}
\label{tab:rl_new_all_tasks_by_rank}
\begin{adjustbox}{max width=\textwidth}
\begin{tabular}{l c cc c cc c cc c}
\toprule
 &  & \multicolumn{2}{c}{\emph{GSM8K}} &  & \multicolumn{2}{c}{\emph{MATH}} &  & \multicolumn{2}{c}{\emph{HumanEval}} &  \\
\cmidrule(lr){3-4} \cmidrule(lr){6-7} \cmidrule(lr){9-10}
\textbf{Method} & $r$ & Prefill & All-pos & $\Delta_{\text{G}}$ & Prefill & All-pos & $\Delta_{\text{M}}$ & Prefill & All-pos & $\Delta_{\text{H}}$ \\
\midrule
\multicolumn{2}{l}{\emph{Llama-3.1-8B (base)}} & \multicolumn{2}{c}{$24.79$} &  & \multicolumn{2}{c}{$17.14$} &  & \multicolumn{2}{c}{$37.20$} &  \\
\rowcolor{gray!12} \quad DiReFT & $1$ & $59.44$ & $57.09$ & \textcolor{green!50!black}{\texttt{+2.35}\phantom{\textsuperscript{*}}} & $12.40$ & $13.48$ & \textcolor{red!70!black}{\texttt{-1.08}\textsuperscript{*}} & $40.24$ & $40.85$ & \textcolor{red!70!black}{\texttt{-0.61}\phantom{\textsuperscript{*}}} \\
\quad DiReFT & $8$ & $60.12$ & $66.94$ & \textcolor{red!70!black}{\texttt{-6.82}\textsuperscript{*}} & $14.12$ & $17.56$ & \textcolor{red!70!black}{\texttt{-3.44}\textsuperscript{*}} & $40.85$ & $28.05$ & \textcolor{green!50!black}{\texttt{+12.80}\textsuperscript{*}} \\
\rowcolor{gray!12} \quad LoRA & $1$ & $58.76$ & $67.02$ & \textcolor{red!70!black}{\texttt{-8.26}\textsuperscript{*}} & $14.62$ & $14.92$ & \textcolor{red!70!black}{\texttt{-0.30}\phantom{\textsuperscript{*}}} & $37.80$ & $37.20$ & \textcolor{green!50!black}{\texttt{+0.60}\phantom{\textsuperscript{*}}} \\
\quad DiReFT & $32$ & $64.22$ & $68.16$ & \textcolor{red!70!black}{\texttt{-3.94}\textsuperscript{*}} & $10.96$ & $17.82$ & \textcolor{red!70!black}{\texttt{-6.86}\textsuperscript{*}} & $44.51$ & $51.22$ & \textcolor{red!70!black}{\texttt{-6.71}\textsuperscript{*}} \\
\rowcolor{gray!12} \quad LoRA & $8$ & $62.24$ & $53.45$ & \textcolor{green!50!black}{\texttt{+8.79}\textsuperscript{*}} & $13.94$ & $17.08$ & \textcolor{red!70!black}{\texttt{-3.14}\textsuperscript{*}} & $41.46$ & $50.00$ & \textcolor{red!70!black}{\texttt{-8.54}\textsuperscript{*}} \\
\quad LoRA & $32$ & $60.27$ & $64.67$ & \textcolor{red!70!black}{\texttt{-4.40}\textsuperscript{*}} & $16.66$ & $16.52$ & \textcolor{green!50!black}{\texttt{+0.14}\phantom{\textsuperscript{*}}} & $34.76$ & $25.61$ & \textcolor{green!50!black}{\texttt{+9.15}\textsuperscript{*}} \\
\midrule
\multicolumn{2}{l}{\emph{Qwen-2.5-0.5B (base)}} & \multicolumn{2}{c}{$13.50$} &  & \multicolumn{2}{c}{$26.00$} &  & \multicolumn{2}{c}{$10.37$} &  \\
\rowcolor{gray!12} \quad DiReFT & $1$ & $48.14$ & $49.28$ & \textcolor{red!70!black}{\texttt{-1.14}\phantom{\textsuperscript{*}}} & $22.14$ & $32.42$ & \textcolor{red!70!black}{\texttt{-10.28}\textsuperscript{*}} & $11.59$ & $9.76$ & \textcolor{green!50!black}{\texttt{+1.83}\phantom{\textsuperscript{*}}} \\
\quad DiReFT & $8$ & $49.51$ & $52.69$ & \textcolor{red!70!black}{\texttt{-3.18}\textsuperscript{*}} & $31.34$ & $30.02$ & \textcolor{green!50!black}{\texttt{+1.32}\textsuperscript{*}} & $15.85$ & $11.59$ & \textcolor{green!50!black}{\texttt{+4.26}\phantom{\textsuperscript{*}}} \\
\rowcolor{gray!12} \quad LoRA & $1$ & $45.56$ & $50.87$ & \textcolor{red!70!black}{\texttt{-5.31}\textsuperscript{*}} & $26.88$ & $31.00$ & \textcolor{red!70!black}{\texttt{-4.12}\phantom{\textsuperscript{*}}} & $18.90$ & $23.78$ & \textcolor{red!70!black}{\texttt{-4.88}\phantom{\textsuperscript{*}}} \\
\quad DiReFT & $32$ & $48.07$ & $48.75$ & \textcolor{red!70!black}{\texttt{-0.68}\phantom{\textsuperscript{*}}} & $32.68$ & $31.86$ & \textcolor{green!50!black}{\texttt{+0.82}\phantom{\textsuperscript{*}}} & $10.37$ & $15.24$ & \textcolor{red!70!black}{\texttt{-4.87}\phantom{\textsuperscript{*}}} \\
\rowcolor{gray!12} \quad LoRA & $8$ & $43.97$ & $52.01$ & \textcolor{red!70!black}{\texttt{-8.04}\textsuperscript{*}} & $25.30$ & $31.84$ & \textcolor{red!70!black}{\texttt{-6.54}\textsuperscript{*}} & $10.98$ & $10.98$ & \textcolor{green!50!black}{\texttt{+0.00}\phantom{\textsuperscript{*}}} \\
\quad LoRA & $32$ & $40.94$ & $49.73$ & \textcolor{red!70!black}{\texttt{-8.79}\textsuperscript{*}} & $33.18$ & $32.96$ & \textcolor{green!50!black}{\texttt{+0.22}\phantom{\textsuperscript{*}}} & $21.95$ & $25.61$ & \textcolor{red!70!black}{\texttt{-3.66}\phantom{\textsuperscript{*}}} \\
\bottomrule
\end{tabular}
\end{adjustbox}
\end{table}

%% file: figs/appendix_vllm_fig.tex
\begin{figure}[t]
    \centering
    \begin{tikzpicture}[
        font=\small,
        >={Stealth[length=4.5pt]},
        every node/.style={align=center},
        colbox/.style={
            draw, rounded corners=2pt,
            minimum width=42mm, minimum height=10mm,
            inner sep=3pt,
        },
        loadbox/.style={colbox, fill=blue!6},
        syncbox/.style={colbox, fill=orange!10},
        wide/.style={
            draw, rounded corners=2pt,
            minimum width=96mm, minimum height=12mm,
            inner sep=4pt,
        },
        layerbox/.style={wide, fill=green!8},
        runnerbox/.style={wide, fill=violet!6},
        lbl/.style={font=\scriptsize, inner sep=1.5pt},
        arr/.style={->, thick},
        darr/.style={->, thick, dashed},
    ]

    \def\colsep{8mm}
    \def\rowsep{7mm}

    \node[loadbox] (ckpt)
        {Trained \texttt{ReftModel}\\\footnotesize adapter weights + config};
    \node[loadbox, below=\rowsep of ckpt] (build)
        {\texttt{ReftModel.build\_vllm(...)}};
    \node[loadbox, below=\rowsep of build] (cfg)
        {\texttt{VllmConfig.reft\_config}\\\footnotesize serialized blueprint};

    \node[syncbox, right=\colsep of ckpt]  (trainer)
        {RL / SFT trainer\\\footnotesize live adapter updates};
    \node[syncbox, right=\colsep of build] (sync)
        {\texttt{sync\_to\_vllm(llm)}};
    \node[syncbox, right=\colsep of cfg]   (rpc)
        {\texttt{collective\_rpc(}\\\texttt{\textquotesingle sync\_reft\_weights\textquotesingle\texttt{)}}};

    \coordinate (midx) at ($(cfg.east)!0.5!(rpc.west)$);
    \node[layerbox, below=12mm of midx, anchor=north] (layer)
        {ReFT-aware decoder layer\\[1pt]
         \footnotesize $\Delta \,=\, \phi_{\text{adapter}}(h{+}r)$, masked by
         \texttt{\_reft\_position\_mask}};
    \node[runnerbox, below=\rowsep of layer] (runner)
        {Model runner \,\,{\footnotesize\itshape(outside compiled region)}\\[1pt]
         \footnotesize position mask $\leftarrow$ \texttt{query\_start\_loc}};

    \draw[arr] (ckpt)    -- (build)   node[midway, lbl, fill=white] {load};
    \draw[arr] (build)   -- (cfg);
    \draw[arr] (trainer) -- (sync)    node[midway, lbl, fill=white] {update};
    \draw[arr] (sync)    -- (rpc);

    \draw[arr] (cfg.south) -- ($(layer.north)+(-22mm,0)$)
        node[midway, lbl, fill=white] {install adapter};
    \draw[arr] (rpc.south) -- ($(layer.north)+(+22mm,0)$)
        node[midway, lbl, fill=white] {in-place \texttt{copy\_}};

    \draw[darr] (runner.north) -- (layer.south)
        node[midway, lbl, fill=white] {mask buffer};

    \begin{scope}[on background layer]
        \node[draw, dotted, rounded corners=3pt,
              inner xsep=5pt, inner ysep=5pt,
              fit=(layer)(runner)] (worker) {};
    \end{scope}
    \node[lbl, anchor=north east, font=\scriptsize\itshape]
        at ($(worker.north east)+(-5pt,8pt)$) {vLLM worker};

    \end{tikzpicture}
    \caption{Architecture of the \oursystem{} fork. The left column is the
    one-time load path from a trained \texttt{ReftModel} into the engine; the
    right column is the per-step training--inference sync path used during
    on-policy RL. Solid arrows carry adapter configuration or weights; the
    dashed arrow marks per-forward state (the position mask) written by the
    model runner outside the compiled region. Adapter parameters live at
    stable memory addresses inside the compiled decoder layer, so in-place
    updates propagate through subsequent graph replays without rebuilding
    the engine.}
    \label{fig:vllm-arch}
\end{figure}

%% file: tabs/sft_vs_rl_r8.tex
\begin{table}[t]
\centering
\caption{\textbf{\oursystem{}s lag PEFTs on GSM8K under both SFT and RL.} GSM8K test accuracy on \texttt{Llama-3.1-8B} with rank-$8$ adapters; the best learning rate was selected for each (training, method, position) cell. \texttt{*} indicates statistical significance.}
\label{tab:sft_vs_rl_r8}
\begin{adjustbox}{max width=\textwidth}
\begin{tabular}{l c cc c cc c}
\toprule
 &  & \multicolumn{2}{c}{\emph{SFT}} &  & \multicolumn{2}{c}{\emph{RL}} &  \\
\cmidrule(lr){3-4} \cmidrule(lr){6-7}
\textbf{Method} & $r$ & Prefill & All-pos & $\Delta_{\text{SFT}}$ & Prefill & All-pos & $\Delta_{\text{RL}}$ \\
\midrule
\multicolumn{2}{l}{\emph{Llama-3.1-8B (base)}} & \multicolumn{2}{c}{$24.79$} &  & \multicolumn{2}{c}{$24.79$} &  \\
\rowcolor{gray!12} \quad LoRA & $8$ & $56.10$ & $60.80$ & \textcolor{red!70!black}{\texttt{-4.70}\phantom{\textsuperscript{*}}} & $62.24$ & $73.31$ & \textcolor{red!70!black}{\texttt{-11.07}\textsuperscript{*}} \\
\quad DiReFT & $8$ & $47.69$ & $57.24$ & \textcolor{red!70!black}{\texttt{-9.55}\textsuperscript{*}} & $60.12$ & $66.94$ & \textcolor{red!70!black}{\texttt{-6.82}\textsuperscript{*}} \\
\bottomrule
\end{tabular}
\end{adjustbox}
\end{table}

%% file: sections/opd.tex
\section{On-policy distillation}
\label{sec:opd}

We find a gap between \oursystem{}s and all-position PEFTs when training with teacher-forced SFT, but no such gap exists in our RL experiments. We hypothesised that on-policy training being more effective for prefill-only adapters implies that we may outperform SFT if we do \textbf{on-policy distillation} \citep{opd,lu2025onpolicydistillation} from an all-positions teacher adapter into a \oursystem{}.

\paragraph{Implementation.}  We take the GRPO implementation in \texttt{trl} and override the reference model (which is used to compute the KL term in GRPO) to apply the teacher adapter. We pass dummy rewards of $0$, so that the loss consists of only the KL term. We set $\beta := 1$ and sweep learning rates.

\paragraph{Experiment 1: Distilling rank-$4$ LoRA on Llama 3.2 1B Instruct.} We take the best-performing rank-$4$ all-positions LoRA finetuned on Llama 3.2 1B Instruct on Tülu-3, and distill it into a prefill-only LoRA.
We show results below for various LRs after $500$ steps of training. Lower KL loss (which is the distillation target) does correlate with lower evaluation loss on Tülu, indicating the distillation is successfully transferring in-distribution learning from the teacher to the student.

Unfortunately, lower learning rates achieve worse distillation loss but better downstream performance, indicating that the training is harming out-of-distribution evaluations. However, we later realised that the Llama Instruct models are generally too strong at instruction-following tasks for Tülu-finetuning to matter, so we instead switch experiments to the base model. Our teacher is actually worse than the untrained student!

\begin{table}[!h]
\centering
\label{tab:distill-prefill-lora}
\begin{tabular}{l cc ccc}
\toprule
& \multicolumn{2}{c}{Loss ($\downarrow$)} & \multicolumn{3}{c}{Downstream Evals ($\uparrow$)} \\
\cmidrule(lr){2-3} \cmidrule(lr){4-6}
$\eta$ & KL & Tülu eval. & GSM8K & MMLU & IFEval \\
\midrule
\textit{(teacher)} & --- & $0.812$ & $34.2$ & $27.6$ & $40.3$ \\
\midrule
\textit{(untrained student)} & --- & $0.963$ & \underline{$36.9$} & $36.9$ & $44.9$ \\
$1 \cdot 10^{-3}$ & $0.075$ & $0.941$ & $20.8$ & $36.6$ & $39.7$ \\
$3 \cdot 10^{-4}$ & \underline{$0.013$} & \underline{$0.939$} & $21.8$ & $32.2$ & $39.2$ \\
$1 \cdot 10^{-4}$ & $0.017$ & $0.947$ & $20.4$ & $35.1$ & $41.8$ \\
$3 \cdot 10^{-5}$ & $0.022$ & $0.958$ & $23.4$ & $37.0$ & $42.9$ \\
$1 \cdot 10^{-5}$ & $0.027$ & $0.962$ & $25.8$ & \underline{$37.0$} & \underline{$45.3$} \\
\bottomrule
\end{tabular}
\end{table}

\paragraph{Experiment 2: Distilling rank-$1$ LoRA on Llama 3.1 1B (base).} Given that finetuning an already instruction-tuned model on Tülu-3 is not informative (i.e., the teacher is not better than the student at the start of training), we instead perform on-policy distillation from a teacher adapter train on the \textit{base} model, as in the main text (\cref{sec:results}). We take the best all-positions rank-$1$ LoRA on Llama 3.2 1B base and on-policy distill it into \lorap{}.

Below, we present our results. On-policy distillation does lift MMLU performance from below-chance to slightly above chance, very slightly improves GSM8K, but actually hurts IFEval. This leads us to conclude that on-policy distillation is not particularly promising in this setting; we may require a better model to experiment on, better teachers, longer training, or different training/evaluation sets, all of which are beyond our computational budget.

\begin{table}[!h]
\centering
\label{tab:distill-prefill-lora-base}
\begin{tabular}{l cc ccc}
\toprule
& \multicolumn{2}{c}{Loss ($\downarrow$)} & \multicolumn{3}{c}{Downstream Evals ($\uparrow$)} \\
\cmidrule(lr){2-3} \cmidrule(lr){4-6}
$\eta$ & KL & Tülu eval. & GSM8K & MMLU & IFEval \\
\midrule
\textit{(teacher)} & --- & $0.901$ & $8.0$ & $23.9$ & $21.6$ \\
\midrule
\textit{(untrained student)} & --- & $1.106$ & $3.0$ & $13.3$ & \underline{$16.5$} \\
$1 \cdot 10^{-3}$ & $0.074$ & $1.121$ & $3.5$ & $26.2$ & $9.4$ \\
$3 \cdot 10^{-4}$ & $0.030$ & \underline{$1.096$} & $3.5$ & \underline{$28.2$} & $6.1$ \\
$3 \cdot 10^{-5}$ & \underline{$0.022$} & $1.099$ & \underline{$3.6$} & $25.8$ & $6.5$ \\
$1 \cdot 10^{-5}$ & $0.038$ & $1.103$ & $3.4$ & $25.7$ & $6.7$ \\
\bottomrule
\end{tabular}
\end{table}

%% file: tabs/sft_lr_rank_tables.tex
\section{Detailed SFT results}

\subsection{Llama-3.2-1B, Tulu-3}

\begin{center}
\textit{Llama-3.2-1B.}\\[2pt]

\end{center}

\subsection{Llama-3.2-1B, OpenThoughts}

\begin{center}
\textit{Llama-3.2-1B.}\\[2pt]
%
\end{center}

\subsection{Llama-3.2-1B, GSM8K}

\begin{center}
\textit{Llama-3.2-1B.}\\[2pt]
%
\end{center}

\subsection{Llama-3.2-1B-Base, Tulu-3}

\begin{center}
\textit{Llama-3.2-1B-Base.}\\[2pt]
%
\end{center}

\subsection{Llama-3.2-1B-Base, OpenThoughts}

\begin{center}
\textit{Llama-3.2-1B-Base.}\\[2pt]
%
\end{center}

\subsection{Llama-3.1-8B, Tulu-3}

\begin{center}
\textit{Llama-3.1-8B.}\\[2pt]
%
\end{center}

\subsection{Llama-3.1-8B, GSM8K}

\begin{center}
\textit{Llama-3.1-8B.}\\[2pt]
%
\end{center}

\subsection{Llama-3.1-8B-Base, Tulu-3}

\begin{center}
\textit{Llama-3.1-8B-Base.}\\[2pt]
%
\end{center}

\section{SFT (downstream performance)}

\subsection{Llama-3.2-1B, Tulu-3}

\begin{center}
\textit{All methods on Tulu-3, Llama-3.2-1B.}\\[2pt]
\begin{adjustbox}{max width=\textwidth}
%
\end{adjustbox}
\end{center}

\subsection{Llama-3.2-1B, OpenThoughts}

\begin{center}
\textit{All methods on OpenThoughts, Llama-3.2-1B.}\\[2pt]
\begin{adjustbox}{max width=\textwidth}
%
\end{adjustbox}
\end{center}

\subsection{Llama-3.2-1B-Base, Tulu-3}

\begin{center}
\textit{All methods on Tulu-3, Llama-3.2-1B-Base.}\\[2pt]
\begin{adjustbox}{max width=\textwidth}
%
\end{adjustbox}
\end{center}

\subsection{Llama-3.1-8B, Tulu-3}

\begin{center}
\textit{All methods on Tulu-3, Llama-3.1-8B.}\\[2pt]
\begin{adjustbox}{max width=\textwidth}
%
\end{adjustbox}
\end{center}

\subsection{Llama-3.1-8B-Base, Tulu-3}

\begin{center}
\textit{All methods on Tulu-3, Llama-3.1-8B-Base.}\\[2pt]
\begin{adjustbox}{max width=\textwidth}
%
\end{adjustbox}
\end{center}

\subsection{Llama-3.1-8B, GSM8K}

\begin{center}
\textit{All methods on GSM8K, Llama-3.1-8B (eval/reward\_full).}\\[2pt]
%
\end{center}

\section{Detailed RL results}

\subsection{Qwen2.5-0.5B, GSM8K}

\begin{center}
\textit{Qwen2.5-0.5B.}\\[2pt]
%
\end{center}

\subsection{Qwen2.5-0.5B, MATH}

\begin{center}
\textit{Qwen2.5-0.5B.}\\[2pt]
%
\end{center}

\subsection{Qwen2.5-0.5B, HumanEval}

\begin{center}
\textit{Qwen2.5-0.5B.}\\[2pt]
%
\end{center}

\subsection{Llama-3.1-8B, GSM8K}

\begin{center}
\textit{Llama-3.1-8B.}\\[2pt]
%
\end{center}

\subsection{Llama-3.1-8B, MATH}

\begin{center}
\textit{Llama-3.1-8B.}\\[2pt]
%
\end{center}

\subsection{Llama-3.1-8B, HumanEval}

\begin{center}
\textit{Llama-3.1-8B.}\\[2pt]
%
\end{center}